
\documentclass{article}

\usepackage{microtype}
\usepackage{graphicx}
\usepackage{subfigure}
\usepackage{booktabs} 
\usepackage{float}

\usepackage{hyperref}


\usepackage[accepted]{icml2024}


\usepackage{amsmath}
\usepackage{amssymb}
\usepackage{mathtools}
\usepackage{amsthm}

\usepackage[capitalize,noabbrev]{cleveref}

\theoremstyle{plain}
\newtheorem{theorem}{Theorem}[section]

\theoremstyle{definition}

\theoremstyle{remark}

\usepackage[textsize=tiny]{todonotes}

\icmltitlerunning{Submission to ICML 2024}

\begin{document}

\twocolumn[
\icmltitle{Graph Multi-Similarity Learning for Molecular Property Prediction
}



\icmlsetsymbol{equal}{*}

\begin{icmlauthorlist}
\icmlauthor{Hao Xu}{equal,yyy}
\icmlauthor{Zhengyang Zhou}{equal,yyy}
\icmlauthor{Pengyu Hong}{yyy}

\end{icmlauthorlist}

\icmlaffiliation{yyy}{Department of Computer Science, Brandeis University, Waltham, USA}

\icmlcorrespondingauthor{Pengyu Hong}{hongpeng@brandeis.edu}

\icmlkeywords{Machine Learning, ICML}
\vskip 0.3in
]


\printAffiliationsAndNotice{\icmlEqualContribution} 
\begin{abstract}
Enhancing accurate molecular property prediction relies on effective and proficient representation learning. It is crucial to incorporate diverse molecular relationships characterized by multi-similarity (self-similarity and relative similarities) \cite{wang2019multi} between molecules. However, current molecular representation learning methods fall short in exploring multi-similarity and often underestimate the complexity of relationships between molecules. Additionally, previous multi-similarity approaches require the specification of positive and negative pairs to attribute distinct pre-defined weights to different relative similarities, which can introduce potential bias. In this work, we introduce Graph Multi-Similarity Learning for Molecular Property Prediction (GraphMSL) framework, along with a novel approach to formulate a generalized multi-similarity metric without the need to define positive and negative pairs. In each of the chemical modality spaces (e.g., molecular depiction image, fingerprint, NMR, and SMILES) under consideration, we first define a self-similarity metric (i.e., similarity between an anchor molecule and another molecule), and then transform it into a generalized multi-similarity metric for the anchor through a pair weighting function. GraphMSL validates the efficacy of the multi-similarity metric across MoleculeNet datasets. Furthermore, these metrics of all modalities are integrated into a multimodal multi-similarity metric, which showcases the potential to improve the performance. Moreover, the focus of the model can be redirected or customized by altering the fusion function. Last but not least, GraphMSL proves effective in drug discovery evaluations through post-hoc analyses of the learnt representations.
\end{abstract}

\section{Introduction}
\label{submission}


Graph Neural Networks (GNNs) have emerged as a prominent approach for molecular representation learning, addressing drug-related challenges \cite{wieder2020compact, zhang2022graph, fang2022geometry, wang2023motif}. However, the generation of task-specific labels for training molecular GNNs is hindered by the resource-intensive and time-consuming nature of chemical synthesis and biological testing experiments. To address this challenge, current research prioritizes self-supervised learning approaches for pretraining molecular GNNs, with a prevalent trend of adopting contrastive learning approaches \cite{wang2021molecular, wang2022molecular, liu2022attention}. (See an introduction of pre-training approaches in Section \ref{Preliminary:pre-training})

Contrastive Learning (CL) is a discriminative representation learning approach by bringing similar instances into close proximity within the latent representation space and pushing apart dissimilar instances \cite{schroff2015facenet}. A fundamental prerequisite of CL is to define positive pairs denoting similarity, and negative pairs representing dissimilarity \cite{jaiswal2020survey}. In the CL-based pre-training of molecular GNNs, positive pairs are often established through either \textit{data augmentation}  \cite{sun2021mocl, you2020graph}, such as node deletion, edge perturbation, subgraph extraction, attribute masking, and subgraph substitution, or \textit{domain knowledge}, exemplified by reactant-product pairing \cite{wang2022chemicalreactionaware} or conformer grouping \cite{moon20233d}. However, such a binary characterization of the relationships among molecules, by designating them as either positive or negative pairs, oversimplifies the complex nature of these connections. Moreover, these CL approaches fail to notice relationships among multiple instances simultaneously by adapting simple contrastive loss, thereby hindering the effectiveness and generalizability of representation learning \cite{wang2019multi,mu2023multi,zhang2023denoising}. (See an illustration of similarity types in CL in Figure Appendix \ref{fig:similarities})

\begin{figure*}[ht!]
    \centering
    \includegraphics[width=1\textwidth]{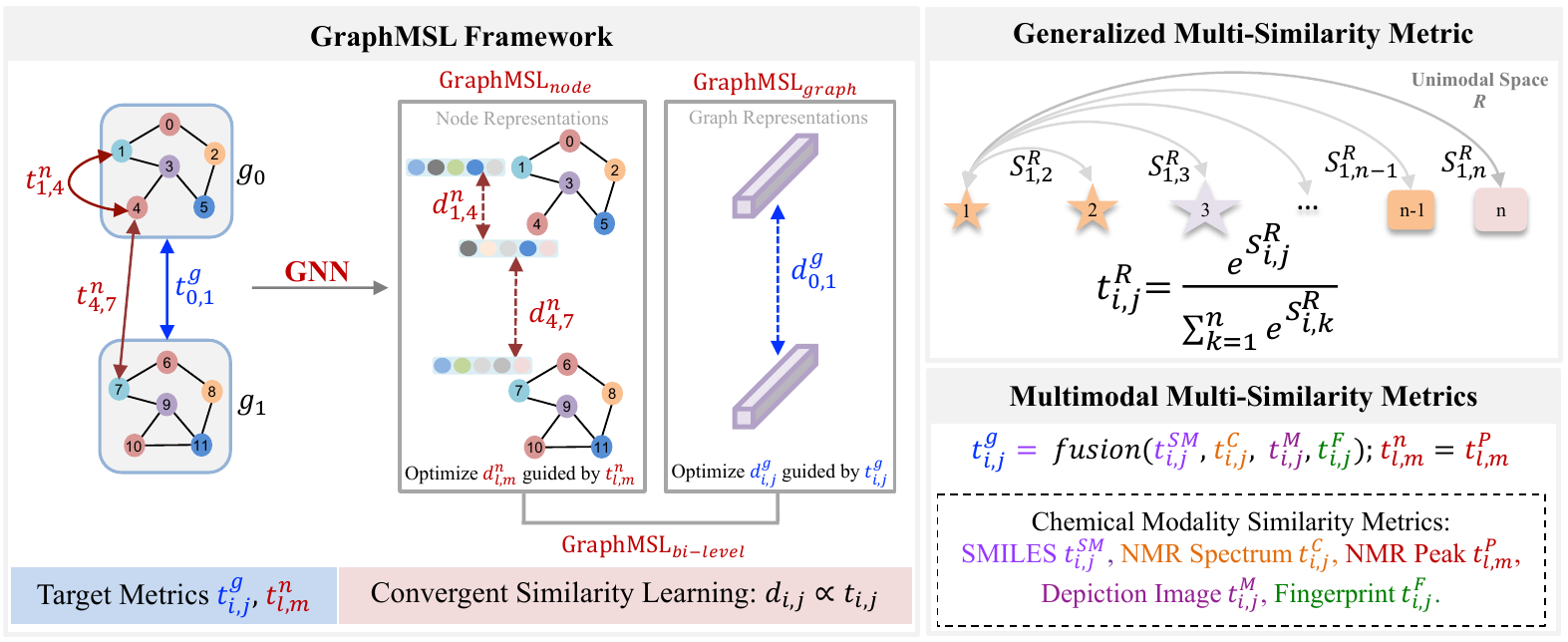}
    \caption{\textbf{Graph Similarity Learning for Molecular Property Prediction (GraphMSL).}  $\mathcal \mathrm t_{i,j}^{g}$, $\mathcal \mathrm t_{l,m}^{n}$ represent graph-level and node-level target similarity, respectively. $\mathcal \mathrm d_{i,j}^{g}$, $\mathcal \mathrm d_{l,m}^{n}$ represent the similarity for graph-level and node-level embeddings, respectively.  Here, $i$ and $j$ are the indices of molecular graphs in the graph pool, and $l$ and $m$ are the indices of atomic nodes in the node pool. Unlike the general contrastive learning framework shown in Appendix Figure \ref{fig:traditional-cl}, GraphMSL doesn't need to define positive or negative pairs and is capable of learning continuous ordering from target similarity.}
    \label{fig:main-structure}
\end{figure*}

Unlike CL, which adapts a binary similarity metric, Similarity Learning (SL) employs a continuous similarity metric for representation learning. It measures the similarity between two instances in the given space \cite{balcan2006theory, wen2023pairwise}. This pairwise similarity provides a localized perspective on the relations between two instances in a given instance pool, known as self-similarity \cite{wang2019multi}. However, the global relations between two instances can be influenced by their belonging to the instance pool. Therefore, self-similarity alone is inadequate for capturing relationships among multiple instances simultaneously, known as relative similarity \cite{wang2019multi}. To address this challenge, Multi-Similarity Learning (MSL) has emerged as a solution, expanding its focus from self-similarity to a global view of relations and encapsulating both self-similarity and relative similarity \cite{wang2019multi, zhang2021jointly, mu2023multi, zhang2023denoising}.


When formulating a similarity metric, a common guiding principle is that an effective similarity function or metric should align tightly with the objectives of specific tasks \cite{balcan2006theory, muller2018introduction,yang2006distance}. To target the drug discovery tasks, a robust similarity metric must be adept at discerning molecules based on key properties regarding drug development. Insights drawn from prior research \cite{xu2023asymmetric} indicate that distinct chemical modalities, such as chemical languages, molecular depiction images, and chemical spectroscopic spectra, possess unique expertise in expressing specific molecular properties. More importantly, a continuous similarity metric for molecules can be projected from each chemical modality space. Thus, a more promising multimodal similarity metric can be formulated by integrating these individual metrics.

In response to the challenges and opportunities in molecular graph representation learning, we propose the Graph Multi-Similarity Learning for Molecular Property Prediction (GraphMSL) framework. This approach aims to advance graph contrastive learning to graph multi-similarity learning by incorporating a continuous multi-similarity metric. A self-similarity metric can be derived from each heterogeneous chemical modality, such as chemical languages, molecular depiction images, and chemical spectroscopic spectra, through representation learning. Then, this self-similarity metric can be transformed into a multi-similarity metric through a pair weighting function. Besides, various unimodal multi-similarity metrics can be fused into a multimodal similarity metric. In addition, GraphMSL framework can be customized for singular or multi-view perspective. 


In summary, our contribution comprises three major aspects: 
\textbf{\textit{Conceptually:}} We introduce a generalized multi-similarity metric for graph representation learning, capturing both self-similarity and relative similarity. Our approach doesn't rely on pre-defined negative or positive pairs, and it satisfies the requirement of convergent similarity learning as shown in Section \ref{thm:convergent-smi-learning}. To the best of our knowledge, this is the first work to demonstrate such generalized multi-similarity for graph representation learning.
\textbf{\textit{Methodologically:}} 
We extract a self-similarity metric from a chemical modality, and transition it into a generalized multi-similarity metric through a pair weighting function, and each modality contributes to a unique multi-similarity metric. Furthermore, we integrate these metrics into a fused multimodal form that has the potential to improve the performance.
\textbf{\textit{Empirically:}} 
GraphMSL excels in various downstream tasks, with performance enhancements achieved through multi-level graph learning. Last but not least, we demonstrate the explainability of the learned representations through two post-hoc analysis. Notably, we explore minimum positive subgraphs and maximum common subgraphs to gain insights for further drug molecule design.


\section{Preliminaries}
\label{Preliminary:all}
\textbf{Directed Message Passing Neural Network (DMPNN).}
The Message Passing Neural Network (MPNN) \cite{gilmer2017neural} is a GNN model that processes an undirected graph $G$ with node (atom) features $x_v$ and edge (chemical bond) features $e_{vw}$. It operates through two distinct phases: a message passing phase, facilitating information transmission across the molecule to construct a neural representation, and a readout phase, utilizing the final representation to make predictions regarding properties of interest. The primary distinction between DMPNN and a generic MPNN lies in the message passing phase. While MPNN uses messages associated with nodes, DMPNN crucially differs by employing messages associated with directed edges \cite{yang2019analyzing}. This design choice is motivated by the necessity to prevent totters \cite{mahe2004extensions}, eliminating messages passed along paths of the form $v_1 v_2 \dots v_n$, where $v_i = v_{i+2}$ for some $i$, thereby eliminating unnecessary loops in the message passing trajectory. 

\textbf{Pre-Training for Molecular GNNs.} 
\label{Preliminary:pre-training}
There are two levels of pre-training tasks: node-level (atomic level) and graph-level (molecular level), which enhance the generalization capabilities of molecular GNNs across diverse downstream tasks \cite{hu2019strategies, xia2022mole, fang2022geometry}. Node-level tasks aim to capture local context, often involving the random masking of nodes and subsequent prediction of their properties based on node representations. In contrast, graph-level tasks focus on extracting global information, such as predicting graph properties based on the graph representation.Self-supervised learning (SSL) is a paradigm in which a model is trained on a task by leveraging the data itself to generate supervisory signals, eliminating the need for external annotations \cite{liu2021self, liu2022graph}. SSL finds active applications in the pre-training tasks of molecular GNNs by formulating label-free pretext tasks, such as graph reconstructions \cite{hu2020gpt, you2020does, liu2021pre}, context predictions \cite{hu2019strategies, peng2020self}, and adopting contrastive learning approaches \cite{wang2021molecular, wang2022molecular, liu2022attention}.

\textbf{Similarity Learning.}
Given object $i$ and object $j$, the optimization of object similarity $d_{i,j}$ in the latent representation space is directed by the target similarity $t_{i,j}$ in a given space, a process commonly recognized as similarity learning \cite{moutafis2016overview, suarez2018tutorial}. Contrastive learning constrains its similarity metric $t_{i,j}$ to a binary setting, taking on values of either 1 or 0. However, similarity learning allows its similarity metric $t_{i,j}$ to be continuous values. Two distinct types of similarities can be identified, as illustrated in Appendix Figure \ref{fig:similarities}: \textit{self-similarity} (the pairwise similarity between two objects, typically defined through cosine similarity), \textit{relative similarity} (distinctions in self-similarity with other pairs) \cite{wang2019multi}. Contrastive learning, implemented with a binary similarity metric, concentrates on self-similarity and fails to explore the complete relationships between samples \cite{oh2016deep,wang2019multi}.

\section{Methods}
We begin by presenting the theorem of convergent similarity learning, followed by introducing the generalized multi-similarity metric and the multimodal multi-similarity metric. 

\subsection{Convergent Similarity learning}
\label{thm:convergent-sl}

Let $\mathcal{S}$ be a set of instances with size of $|\mathcal{S}|$, and let $\mathcal{P}$ represent the learnable latent representations of instances in $\mathcal{S}$ such that $|\mathcal{P}| = |\mathcal{S}|$. For any two instances $i, j \in \mathcal{S}$, their respective latent representations are denoted by $\mathcal{P}_{i}$ and $\mathcal{P}_{j}$. Let $t_{i,j}$ represent the target similarity between instances $i$ and $j$ in a given domain, and let  $d_{i,j}$ be the similarity between $\mathcal{P}_i$ and $\mathcal{P}_j$ in the latent space.
\begin{theorem}[Theorem of Convergent Similarity learning]
\label{thm:convergent-smi-learning}
If $t_{i,j}$ is non-negative and $\{t_{i,j}\}$ satisfies the constraint $\sum_{j=1}^{|\mathcal{S}|}t_{i,j} = 1$, consider the loss function for an instance $i$ defined as follows:
\begin{equation}
    L(i) = -\sum_{j=1}^{|\mathcal{S}|} t_{i,j} \log \left( \frac{e^{d_{i,j}}}{\sum_{k=1}^{|\mathcal{S}|} e^{d_{i,k}}} \right)
\end{equation}
then when it reaches ideal optimum, the relationship between $t_{i,j}$ and $d_{i,j}$ satisfies:
\begin{equation}
    \text{softmax}(d_{i,j}) = t_{i,j}
\end{equation}
\end{theorem}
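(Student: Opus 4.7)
The plan is to recognize the given loss as a cross-entropy between the target distribution $\{t_{i,j}\}_{j}$ and the softmax distribution induced by $\{d_{i,j}\}_{j}$, and then invoke Gibbs' inequality (nonnegativity of KL divergence). Define $p_{i,j} = e^{d_{i,j}} / \sum_{k=1}^{|\mathcal{S}|} e^{d_{i,k}}$. First I would verify that $\{p_{i,j}\}_{j}$ is a valid probability distribution over $\mathcal{S}$: each $p_{i,j} > 0$ and $\sum_{j} p_{i,j} = 1$ by construction of the softmax. By the hypothesis, $\{t_{i,j}\}_{j}$ is also a probability distribution since $t_{i,j} \ge 0$ and $\sum_{j} t_{i,j} = 1$.

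Next I would rewrite the loss as a sum of an entropy term and a KL divergence:
\begin{equation}
L(i) = -\sum_{j=1}^{|\mathcal{S}|} t_{i,j} \log p_{i,j} = H(t_{i,\cdot}) + D_{\mathrm{KL}}\!\left(t_{i,\cdot} \,\|\, p_{i,\cdot}\right),
\end{equation}
where $H(t_{i,\cdot}) = -\sum_{j} t_{i,j} \log t_{i,j}$ depends only on the fixed target distribution and is therefore a constant with respect to the learnable latent representations $\mathcal{P}$. Hence minimizing $L(i)$ over $\mathcal{P}$ is equivalent to minimizing $D_{\mathrm{KL}}(t_{i,\cdot} \| p_{i,\cdot})$.

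By Gibbs' inequality, $D_{\mathrm{KL}}(t_{i,\cdot} \| p_{i,\cdot}) \ge 0$, with equality if and only if $p_{i,j} = t_{i,j}$ for every $j$. Thus the ideal optimum of $L(i)$ is attained precisely when $\mathrm{softmax}(d_{i,j}) = t_{i,j}$, which is the claimed conclusion.

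The only subtlety worth flagging is the phrase \emph{ideal optimum}: since softmax outputs are strictly positive, whenever some $t_{i,j} = 0$ the equality $p_{i,j} = t_{i,j}$ can only be approached as a limit (for instance by letting the corresponding $d_{i,j} \to -\infty$) rather than exactly realized by bounded $\{d_{i,j}\}$. I would state this as a short remark so the theorem is interpreted as characterizing the infimum of $L(i)$ and its unique minimizer over the simplex, rather than claiming attainment for degenerate targets. No other step is genuinely difficult; the argument is essentially the standard cross-entropy/KL identity, and the heavy lifting is conceptual (identifying the softmax output as a distribution so that Gibbs' inequality applies).
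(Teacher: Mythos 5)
Your proof is correct, but it takes a genuinely different route from the paper. The paper argues by first-order conditions: it differentiates $L(i)$ with respect to each $d_{i,j}$, obtains $\partial L(i)/\partial d_{i,j} = -(t_{i,j} - \mathrm{softmax}(d_{i,j}))$ after using $\sum_j t_{i,j}=1$, sets this to zero, and then checks that the diagonal second derivative $\mathrm{softmax}(d_{i,j})(1-\mathrm{softmax}(d_{i,j}))$ is positive before noting that the optimum preserves the ordering of the targets. You instead decompose the loss as $L(i) = H(t_{i,\cdot}) + D_{\mathrm{KL}}(t_{i,\cdot}\,\|\,p_{i,\cdot})$ and invoke Gibbs' inequality. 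The paper's calculation is more elementary and incidentally exhibits the familiar cross-entropy gradient $\mathrm{softmax}(d)-t$, but its second-order check only inspects the diagonal of the Hessian, so it does not by itself rigorously establish that the stationary point is a global minimum; your KL argument gets global optimality for free and is the more airtight of the two on that point. You also flag a genuine subtlety the paper omits: since the softmax is strictly positive, the equality $p_{i,j}=t_{i,j}$ is only attainable in the limit when some $t_{i,j}=0$, so the theorem should be read as characterizing the infimum and its unique minimizer over the simplex. That remark is worth keeping.
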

For detailed proof, please refer to Appendix Section \ref{appendix:gml-guide-proof}.


\subsection{Generalized Multi-Similarity}
We formulate a generalized multi-similarity metric from self-similarity by adapting the softmax function as a pair weighting function. The formula for the generalized multi-similarity, denoted as $t_{i,j}^{R}$ between the $i^{th}$ and $j^{th}$ instances under a given space $R$, is provided below:
\begin{equation}
{t}_{i,j}^{R} = softmax({S}_{i,j}^{R}) = \frac{e^{S_{i,j}^{R}}}{\sum_{k=1}^{|\mathcal{S}|} e^{{S}_{i,k}^{R}}}
\end{equation}
where $S_{i,j}^{R}$ represents self-similarity, and $|\mathcal{S}|$ is the size of the instance set. Defined in this manner, the generalized multi-similarity metric incorporates both self-similarity and relative similarities. Notably, unlike other multi-similarity learning approaches \cite{wang2019multi, zhang2021jointly}, our method does not rely on the categorization of negative and positive pairs for the pair weighting function. Additionally, the use of the softmax function ensures that the generalized target similarity $t_{i,j}$ adheres to the principles of Convergent Similarity Learning (refer to Section \ref{thm:convergent-sl}). 


\subsection{Multimodal Multi-Similarity}
With a set of generalized multi-similarities $\{t^{R}\}$ from various modality spaces, we can transform generalized multi-similarities from respective unimodality space to multimodal space through a fusion function. There are numerous potential designs of the fusion function. For simplicity, we take linear combination as a demonstration. The multimodal generalized multi-similarity $t_{i,j}^{M}$ between $i^{th}$ and $j^{th}$ objects can be defined as follows: 
\begin{align}
  t_{i,j}^{M} &= fusion(\{t^{R}\})\\
  &= \sum w_{R} \cdot t_{i,j}^{R}
\label{equ:graph-guide-zhou-general}
\end{align}
where $t_{i,j}^{R}$ represents the target similarity between $i^{th}$ and $j^{th}$ instance in unimodal space $R$, $w_{R}$ is the pre-defined weights for the corresponding modal, and $\sum w_{R} = 1$. Such that, it still satisfy the requirement of Convergent Similarity Learning (See proof in Appendix Section\ref{appendix:guarantee-fusion-sum}).

\section{Experiments}
In this section, we begin by presenting the design of the multi-similarity metric and the GraphMSL framework. Subsequently, we showcase the results obtained from GraphMSL. Finally, we demonstrate the explainability of the learned molecular representations. (Please refer to the experimental details of pre-training and fine-tuning in the Appendix Section \ref{appendix:exp-setting}.)

\subsection{The Design of Multi-Similarity Metric} 

\subsubsection{Graph-Level Multi-Similarity Metric}

\textbf{Self-Similarity.} By employing representation learning, various chemical modalities, such as NMR spectra, depiction images, and SMILES, can be encoded into latent representation vectors. The cosine similarity between two vectors serves as self-similarity. The unimodal self-similarity for $^{13}$C NMR spectrum, denoted as $S_{i,j}^{C}$, can be defined as follows:
\begin{equation}
    S_{i,j}^{C} = Cos( \mathcal V_{i}, \mathcal V_{j}) = \frac{\mathcal{V}_i \cdot \mathcal{V}_j^T}{\|\mathcal{V}_i\| \cdot \|\mathcal{V}_j\|}
\end{equation}
where $\mathcal V_{i},  \mathcal V_{j} $ represents the embedding of NMR spectra for two given molecules. Similarly, the uni-modality similarity similarity for depiction images and SMILES can be obtained. The self-similarity of fingerprints adapts a well-established similarity function for molecules, namely the Tanimoto similarity \cite{bajusz2015tanimoto}. (See more details in Appendix Section \ref{appendix:uni-modal-self})

\textbf{Generalized Multi-Similarity Metric.} Transitioning from self-similarity to a generalized similarity, we apply the softmax function as our pair weighting mechanism. The formula of generalized similarity is provided below, illustrated with an example of $^{13}$C NMR similarity:
\begin{equation}
t^{C}_{i,j} = softmax(S_{i,j}^{C})_i = \frac{e^{S_{i,j}^{C}}}{\sum_{k=1}^{|\mathcal{G}|} e^{s_{i,k}^{C}}}
\end{equation}
where $t^{C}_{i,j}$ represents generalized similarity, $S_{i,j}^{C}$ denotes the self-similarity, and $|\mathcal{G}|$ is the size of molecular graph pool. For the expressions for SMILES and Image modality, please refer to Appendix Section \ref{appendix:uni-modal-self}

\textbf{Multimodal Multi-Similarity Metric.} 
We employ a simple linear combination to formulate the multimodal multi-similarity $t_{i,j}^{M}$ between the $i^{th}$ and $j^{th}$ molecules, represented as a graph-level similarity $t_{i,j}^{g}$, as follows:
\begin{align}
t_{i,j}^{g} & = t_{i,j}^{M} \\
& = w_{SM} \cdot t^{SM}_{i,j} + w_{{C}} \cdot t^{C}_{i,j} + w_{{I}} \cdot t^{I}_{i,j} + w_{{F}} \cdot t^{F}_{i,j}
\label{equ:graph-guide-zhou-gen}
\end{align}
where $t^{SM}_{i,j}$ denotes the similarity based on SMILES, $t^{C}_{i,j}$ denotes the similarity with respect to $^{13}$C NMR spectrum, $t^{I}_{i,j}$ denotes the similarity regarding images, and $f$ denotes the similarity based on fingerprints, $w_{SM}$, $w_{C}$, $w_{I}$, and $w_{F}$ are the pre-defined weights for their respective similarity, and $w_{SM} + w_{{C}} + w_{{I}} + w_{{F}} = 1$. 

\subsubsection{Node-Level Multi-Similarity Metric}
\label{Knowledge-Span-ppm}
\textbf{Self-Similarity.} The self-similarity among nodes (atoms) is derived from the positions of their signal peaks on $^{13}$C NMR spectra, measured in parts per million (ppm). The ppm values are continuous, typically ranging from 0 to 200 (see more introduction of ppm in Appendix \ref{appendix:ppm}). The self-similarity of NMR peaks $S^{P}_{l,m}$ can be defined as following:
\begin{equation}
    S^{P}_{l,m} = \frac{\tau_{2}}{|ppm_{l} - ppm_{m}|+\tau_{1}}
\end{equation}
where $ppm_{l}$ and $ppm_{m}$ are the positions of NMR peaks for the $l^{th}$, $m^{th}$ Carbon atom, $\tau_{1}$ and $\tau_{2}$ are temperature hyper-parameter.

\textbf{Generalized Similarity Metric.} A generalized multi-similarity $t^{P}_{l,m}$, as a node-level similarity $t_{l,m}^{n}$, can be formulated with softmax function, as shown below
\begin{equation}
t_{l,m}^{n} = t^{P}_{l,m}
= softmax(S^{P}_{l,m})_l = \frac{e^{S^{P}_{l,m}}}{\sum_{q=1}^{|\mathcal{N}|} e^{S^{P}_{l,q}}}
\end{equation}
where $S^{P}_{l,m}$ represents self-similarity of NMR peaks, $|\mathcal{N}|$ is the size of atomic node pool.

\subsection{The Design of GraphMSL Framework} 

The GraphMSL framework is versatile, allowing customization for singular or multiple views. In this context, we showcase the customization options for  graph-level, node-level, or bi-level GraphMSL. The graph-level GraphMSL, denoted as $\text{GraphMSL}_{graph}$, incorporates multi-similarity metrics derived from molecular level chemical semantics. The node-level GraphMSL, denoted as $\text{GraphMSL}_{node}$, incorporates multi-similarity metrics derived from atomic level chemical semantics. The bi-level  GraphMSL, denoted as $\text{GraphMSL}_{bi-level}$, incorporates multi-similarity metrics derived from both molecular and atomic level chemical semantics. In these implementations, the graph encoder adopts DMPNN \cite{yang2019analyzing} architecture, an interactive message passing scheme considering the interactions. Additionally, neither of these two modules requires additional projection layers.
 
The loss function of $\text{GraphMSL}_{graph}$ model, noted as $L_{graph}$, can be expressed as:
\begin{equation}
    L_{graph} 
    =-\frac{1}{|\mathcal{G}|}\sum_{\substack{1\leq j \leq |\mathcal{G}|}} \mathcal \mathrm t_{i,j}^{g} \log \frac{e^{\mathcal \mathrm{d}_{i,j}^{g}}}{\sum_{\substack{1\leq k \leq |\mathcal{G}|}} e^{\mathcal \mathrm{d}_{i,k}^{g}}}
\end{equation}
where $t_{i,j}^{g}$ represents graph-level similarity metrics, $d_{i,j}^{g}$ represents graph-level latent space similarity metrics, $i, j, k$ represent the indices of molecular graphs within a graph pool of size $|\mathcal{G}|$ per batch. 

The loss function of $\text{GraphMSL}_{node}$ model, noted as $L_{node}$, can be expressed as:
\begin{equation}
    L_{node} 
    =-\frac{1}{|\mathcal{N}|}\sum_{\substack{1\leq m \leq |\mathcal{N}|}} \mathcal \mathrm t_{l,m}^{n} \log \frac{e^{\mathcal \mathrm{d}_{l,m}^{n}}}{\sum_{\substack{1\leq q \leq |\mathcal{N}|}} e^{\mathcal \mathrm{d}_{lq}^{n}}}
\end{equation}
where $t_{l,m}^{n}$ represents node-level similarity metrics, $d_{l,m}^{g}$ represents node-level latent space similarity metrics, $l, m, q$ represent the indices of nodes within a node pool of size $|\mathcal{N}|$.

The loss function of $\text{GraphMSL}_{bi-level}$ model, noted as $L_{bi-level}$, can be expressed as:
\begin{equation}
    L_{bi-level} = L_{graph} + L_{node}
\end{equation}



\subsection{Results}
\subsubsection{Overall performance of $\text{GraphMSL}_{{graph}}$}

The performance of $\text{GraphMSL}_{graph}$ is evaluated through a comparative analysis with a range of baselines, the specifics of which are described in Appendix Section \ref{appendix:baselines}. We report the performance metrics of $\text{GraphMSL}_{graph}$ across 8 classification under ROC-AUC and 3 regression tasks under RMSE from the MoleculeNet benchmark \cite{wu2018moleculenet}, as shown in Tables \ref{table:overall_performance}. Within these tables, the best results are denoted in bold, and the second-best are indicated with underlining. From the comparative evaluation, we find that: 

1) $\text{GraphMSL}_{graph}$ outperforms the baselines in seven of the eight evaluated classification tasks, including BBBP, BACE, SIDER, HIV, MUV, Tox21 and ToxCast. 

2) In the regression tasks, $\text{GraphMSL}_{graph}$ also achieves the highest performance across all evaluated benchmarks, which include ESOL, FreeSolv, and Lipophilicity.

3) While $\text{GraphMSL}_{graph}$ doesn't outperform baselines on the Clintox task, it attains average performance.

In short, these findings underscore the proficiency of $\text{GraphMSL}_{graph}$ in learning molecular representations that are  effective and impactful, as evidenced by its commendable performance across a suite of diverse tasks.

\setlength{\tabcolsep}{6pt}
\begin{table*}[ht!]
\caption{Overall performances (ROC-AUC) on classification downstream tasks. The best results are denoted in bold, and the second-best are indicated with underlining. (Note: N-Gram is highly time-consuming on ToxCast.) }
\vspace{2pt}
\label{table:overall_performance}
\begin{center}
\begin{scriptsize}
\begin{sc}
\begin{tabular}{lcccccccc}
\toprule
Data Set & BBBP & bace & Sider & Clintox & HIV & MUV & Tox21 & ToxCast \\
\midrule
AttentiveFP & 64.3$\pm$1.8 & 78.4$\pm$2.2  & 60.6$\pm$3.2 & 84.7$\pm$0.3 & 75.7$\pm$1.4 & 76.6$\pm$1.5 & 76.1$\pm$0.5 & 63.7$\pm$0.2  \\


DMPNN & 91.9$\pm$3.0 & 85.2$\pm$0.6 & 57.0$\pm$0.7 & 90.6$\pm$0.6 & 77.1$\pm$0.5 & 78.6$\pm$1.4 & 75.9$\pm$0.7 & 63.7$\pm$0.2  \\


\text{N-Gram} & 91.2$\pm$0.3 & 79.1$\pm$1.3 & 63.2$\pm$0.5 & 87.5$\pm$2.7 & 78.7$\pm$0.4  & 76.9$\pm$0.7 & 76.9$\pm$2.7 & -\\


GEM & 72.4$\pm$0.4 & 85.6$\pm$1.1 & 67.2$\pm$0.4 & 90.1$\pm$1.3 & 80.6$\pm$0.9 & 81.7$\pm$0.5 & 78.1$\pm$0.1 & 69.2$\pm$0.4\\

Uni-Mol & 72.9$\pm$0.6 & 85.7$\pm$0.2 & 65.9$\pm$1.3 & \underline{91.9$\pm$1.8} & 80.8$\pm$0.3 & \underline{82.1$\pm$1.3} & 79.6$\pm$0.5 & 69.6$\pm$0.1 \\

GROVER & 86.8$\pm$2.2 & 82.4$\pm$3.6 & 61.2$\pm$2.5 & 70.3$\pm$13.7 & 68.2$\pm$1.1 & 67.3$\pm$1.8 & 80.3$\pm$2.0 & 56.8$\pm$3.4\\

InfoGraph & 69.2$\pm$0.8 & 73.9$\pm$2.5 & 59.2$\pm$0.2 & 75.1$\pm$5.0 & 74.5$\pm$1.8 &  74.0$\pm$1.5 & 73.0$\pm$0.7 & 62.0$\pm$0.3 \\

GraphCL & 67.5$\pm$3.3 & 68.7$\pm$7.8 & 60.1$\pm$1.3 & 78.9$\pm$4.2 & 75.0$\pm$0.4 & 77.1$\pm$1.0 & 75.0$\pm$0.3 & 62.8$\pm$0.2 \\


MolCLR & 73.3$\pm$1.0 & 82.8$\pm$0.7 & 61.2$\pm$3.6 & 89.8$\pm$2.7 & 77.4$\pm$0.6 & 78.9$\pm$2.3 & 74.1$\pm$5.3 & 65.9$\pm$2.1 \\

$\text{MolCLR}_{\text{cmpnn}}$ & 72.4$\pm$0.7 & 85.0$\pm$2.4 & 59.7$\pm$3.4 & 88.0$\pm$4.0 & 77.8$\pm$5.5 & 74.5$\pm$2.1 & 78.4$\pm$2.6 & 69.1$\pm$1.2 \\

GraphMVP & 72.4$\pm$1.6 & 81.2$\pm$9.0 & 63.9$\pm$1.2 & 79.1$\pm$2.8 & 77.0$\pm$1.2 & 77.7$\pm$6.0 & 75.9$\pm$5.0 & 63.1$\pm$0.4\\

\hline

$\text{GraphMSL}_{graph}$ & 93.2$\pm$0.8 & \underline{93.6$\pm$2.7} & \textbf{68.1$\pm$1.5} & 88.8$\pm$4.6 & \textbf{83.3$\pm$1.1}  & \textbf{84.5$\pm$2.9} & \underline{86.1$\pm$0.6} & \textbf{71.4$\pm$0.2}\\
$\text{GraphMSL}_{node}$& \underline{93.4$\pm$2.7} & 89.3$\pm$1.7 & 62.8$\pm$2.1 & 86.1$\pm$5.4& 82.1$\pm$0.4 &75.4$\pm$5.2 &84.9$\pm$1.0 & 70.6$\pm$0.8 \\
$\text{GraphMSL}_{bi-level}$& \textbf{94.3$\pm$0.8} & \textbf{94.5$\pm$0.7} & \underline{67.3$\pm$0.6} & \textbf{93.8$\pm$0.8} & \underline{83.0$\pm$0.7} &81.5$\pm$3.7 & \textbf{86.1$\pm$0.8} & \underline{71.2$\pm$1.1}  \\

\bottomrule
\end{tabular}
\end{sc}
\end{scriptsize}
\end{center}
\end{table*}

\begin{table}[t]
\setlength{\tabcolsep}{6pt}
\caption{Overall performances (RMSE) on regression downstream tasks. The best results are denoted in bold, and the second-best are indicated with underlining.}
\label{table:overall_performance_regression}
\begin{center}
\begin{scriptsize}
\begin{tabular}{lccc}
\toprule
Data Set & ESOL & FreeSolv & Lipo  \\
\midrule
AttentiveFP & 0.877$\pm$0.029 & 2.073$\pm$0.183 & 0.721$\pm$0.001 \\

DMPNN & 1.050$\pm$0.008 & 2.082$\pm$0.082 & 0.683$\pm$0.016\\

$\text{N-Gram}_{\text{RF}}$  & 1.074$\pm$0.107 & 2.688$\pm$0.085 & 0.812$\pm$0.028 \\

$\text{N-Gram}_{\text{XGB}}$ & 1.083$\pm$0.082 & 5.061$\pm$0.744 & 2.072$\pm$0.030   \\

GEM & 0.798$\pm$0.029 & 1.877$\pm$0.094 & 0.660$\pm$0.008 \\

Uni-Mol & \underline{0.788$\pm$0.029} & 1.620$\pm$0.035 & 0.660$\pm$0.008 \\




GROVER & 1.423$\pm$0.288 & 2.977$\pm$0.615 & 0.823$\pm$0.010\\

MolCLR & 1.113$\pm$0.023 & 2.301$\pm$0.247 & 0.789$\pm$0.009  \\

$\text{MolCLR}_{\text{CMPNN}}$  & 0.911$\pm$0.082 & 2.021$\pm$0.133 & 0.875$\pm$0.003 \\


\hline
$\text{GraphMSL}_{graph}$ & \textbf{0.746$\pm$0.060} & \textbf{1.437$\pm$ 0.134} & \textbf{0.537$\pm$0.005} \\
$\text{GraphMSL}_{node}$ & 0.924$\pm$0.083 &1.707$\pm$0.126 & 0.587$\pm$0.021  \\
$\text{GraphMSL}_{bi-level}$ & 0.843$\pm$0.094 & \underline{1.601$\pm$0.057} & \underline{0.562$\pm$0.005} \\
\bottomrule
\end{tabular}
\vspace{-8pt} 
\end{scriptsize}
\end{center}
\end{table}

\subsubsection{Ablation Study-Various Mutil,modal Similarity Metrics}
\label{MetricsStudy}

We evaluate the performance of $\text{GraphMSL}_{graph}$ using a diverse set of similarity metrics, as outlined in Table \ref{tab:multimodal-metrics-single-level}. Each uni-modal similarity metric demonstrates unique strengths across various tasks. For instance, the model guided by Image similarity metrics exhibits outstanding performance in ESOL compared to other uni-modality metrics. Solubility is closely tied to the polarity of molecules. High polar atoms possess the ability to form hydrogen bonds with water, thereby enhancing solubility. In molecular depiction images, nonpolar carbon atoms (C) are typically represented as dots. Conversely, highly polar atoms like oxygen (O), nitrogen (N), and fluorine (F) are depicted more prominently, occupying a significant portion of the image. Consequently, image representations place considerable emphasis on this information, distinguishing between nonpolar and polar atoms by varying pixel density. 

Notably, the true strength lies in the flexibility of multimodal similarity metrics achieved through the fusion of multiple unimodal  metrics. There are five variations of multimodal similarity metrics, denoted as  $\text{Fusion}_\text{{Smiles}}$, $\text{Fusion}_\text{{NMR}}$, $\text{Fusion}_\text{{Image}}$, $\text{Fusion}_\text{{Fingerprint}}$, and $\text{Fusion}_\text{{Average}}$ (Please refer to their configurations in Appendix Section \ref{appendix:config-fusion}). Through the comparisons, it becomes evident that a well-designed multimodal similarity metric can significantly enhance model performance compared to unimodal metrics. For example, $\text{Fusion}_{\text{Smiles}}$ boosts performance in Tox21 tasks, $\text{Fusion}_{\text{NMR}}$ enhances results in MUV task, and $\text{Fusion}_{\text{Fingerprint}}$ contributes to improved outcomes in BBBP and BACE tasks.

\setlength{\tabcolsep}{4pt}


\subsubsection{Ablation Study-Mutil-View Learning}

While the $\text{GraphMSL}_{{graph}}$ model demonstrates its efficacy across a variety of downstream tasks, we further investigate the potential of GraphMSL framework by including both graph (molecule) and node (atom) levels to assess possible performance improvements. Our findings indicate that:

1) For Clintox, $\text{GraphMSL}_{{bi-level}}$ exhibits superior performance, compared with the baselines and $\text{GraphMSL}_{{graph}}$. Notably, $\text{GraphMSL}_{{bi-level}}$ achieves a 5\% enhancement in ROC-AUC on the Clintox dataset over $\text{GraphMSL}_{{graph}}$.

2) For tasks such as BBBP, BACE, HIV, MUV, and TOX21, $\text{GraphMSL}_{{bi-level}}$ outperforms all compared models, though $\text{GraphMSL}_{{bi-level}}$ is better than $\text{GraphMSL}_{{graph}}$ by a marginal degree.

3) Conversely, $\text{GraphMSL}_{{bi-level}}$ underperforms relative to $\text{GraphMSL}_{{graph}}$ in Sider, ToxCast, ESOL, FreeSolv, and Lipo datasets. This discrepancy in performance may stem from  small portion of unresolved interactions or slight discord between the graph-level and node-level similarities under the scenarios of these tasks.

\subsection{Explainability of Learnt Representations}
To demonstrate the interpretability of learnt representations, we present post-hoc analysis for two tasks, ESOL and BACE, as demonstration. The results showcase learnt representations can capture task-specific patterns and offer valuable insights for molecular design.

\textbf{ESOL}. We apply t-SNE to reduce molecule embeddings from 300 to 2, generating a heatmap correlating Log solubility (Figure 3.a). The heatmap visually depicts a smooth transition from high solubility (depicted in red) to low solubility (depicted in blue). The embeddings adeptly capture essential structural information and solubility-related patterns, organizing molecules with analogous solubility in close proximity within the embedding space. Through our investigation, we discovered that molecules with comparable solubility share common graph features, such as recurring motifs or an increased frequency of specific nodes (atoms). For example, in the region of lowest solubility (enclosed by blue dashed line), the molecular graphs all contain biphenyl groups. Biphenyls are nonpolar molecules, and as a consequence, they have limited interaction with water, leading to low solubility in aqueous environments. In the region of highest solubility (enclosed by red solid line), molecules exhibit high polarity, characterized by the prevalence of nitrogen (N) and oxygen (O) atoms. This specific graph configuration facilitates the formation of hydrogen bonds with water molecules, thereby enhancing solubility.
\setlength{\tabcolsep}{2.5pt}
\begin{table*}[ht]
\caption{Ablation study on the performances of $\text{GraphMSL}_{graph}$. The best results are denoted in boldf, and the second-best are indicated with underlining. The first 8 tasks are for classification under evaluation of ROC-AUC, while the last three are for regression with evaluation of RMSE. See the detailed performance of $\text{GraphMSL}_{bi-level}$ upon different similarity metrics in Appendix Table \ref{tab:multimodal-metrics-bi-level}. }
\label{tab:multimodal-metrics-single-level}
\begin{center}
\begin{scriptsize}
\begin{sc}
\begin{tabular}{lcccccccc|ccc}
\toprule
Data Set & BBBP & bace & Sider & Clintox & HIV & MUV & Tox21 & ToxCast & ESOL & FreeSolv & Lipo \\
\midrule
SMILES & 92.9$\pm$1.5 & 90.9$\pm$3.3 & 64.9$\pm$0.3 & 78.2$\pm$1.9  & \textbf{83.3$\pm$1.1} & 80.1$\pm$2.5 & 85.7$\pm$1.2 &70.5$\pm$2.5 & 0.811$\pm$ 0.109 & 1.623$\pm$ 0.168 & 0.539$\pm$ 0.017\\

NMR & 91.0$\pm$2.0 & 93.2$\pm$2.7 & \textbf{68.1$\pm$1.5} & 87.7$\pm$6.5  & 80.9$\pm$5.0 & 80.9$\pm$5.0 &  85.1$\pm$0.4 & 71.1$\pm$0.8 & 0.844$\pm$ 0.123 & 2.417$\pm$ 0.495 & 0.609$\pm$ 0.031 \\

Image & 93.1$\pm$2.4 & 92.9$\pm$1.8 & 65.3$\pm$1.5 & 86.2$\pm$6.5 & 82.3$\pm$0.6 & 78.7$\pm$1.7 & \underline{86.0$\pm$1.0} &71.0$\pm$1.6& \underline{0.761$\pm$ 0.068} & 1.648$\pm$ 0.045 & \textbf{0.537$\pm$ 0.005} \\

Fingerprint & 92.9$\pm$2.3 & 91.7$\pm$3.6  & 65.6$\pm$0.7 & 87.5$\pm$6.0 & 81.2$\pm$2.5 & \underline{82.9$\pm$3.1} & 85.3$\pm$1.3 &70.0$\pm$1.4 &0.808$\pm$ 0.071 & \textbf{1.437$\pm$ 0.134} & 0.565$\pm$ 0.017\\
\hline


$\text{Fusion}_{\text{Smiles}}$ & \underline{93.1$\pm$1.4} & 91.4$\pm$3.9 & 66.1$\pm$1.0 & 86.6$\pm$6.6 & 82.7$\pm$1.1 & 82.2$\pm$4.1 &  \textbf{86.1$\pm$0.6} & \underline{71.3$\pm$1.3} &0.800$\pm$0.068 & 1.505$\pm$0.177 & \underline{0.537$\pm$0.145} \\

$\text{Fusion}_{\text{NMR}}$ & 93.0$\pm$1.6 & 93.0$\pm$2.4 & 64.3$\pm$1.9 & 83.5$\pm$10.6 &  81.4$\pm$3.1& \textbf{84.5$\pm$2.9} & 85.8$\pm$1.1 & 70.9$\pm$1.1 &0.783$\pm$0.105 & \underline{1.472$\pm$0.072} & 0.552$\pm$0.029\\

$\text{Fusion}_{\text{Image}}$  & 92.9$\pm$3.4 & 92.9$\pm$2.4 & 64.3$\pm$1.6 & \underline{88.6$\pm$4.6} & \underline{83.0$\pm$0.9} & 81.6$\pm$4.8 & 85.8$\pm$3.8 & 70.6$\pm$1.7 &\textbf{0.746$\pm$0.060} & 1.587 $\pm$0.143 & 0.549$\pm$0.025 \\

$\text{Fusion}_{\text{Fingerprint}}$ & \textbf{93.2$\pm$0.8} & \textbf{93.6$\pm$2.7} & 65.8$\pm$0.7 & 85.4$\pm$9.4 & 82.4$\pm$3.1 & 81.6$\pm$2.5 & 85.3$\pm$1.1 & 71.1$\pm$1.1&0.818$\pm$0.054 & 1.535$\pm$0.080 & 0.573$\pm$0.040 \\

$\text{Fusion}_{\text{Average}}$  & 90.2$\pm$6.8 & \underline{93.4$\pm$2.7} & \underline{67.0$\pm$0.6} & \textbf{88.8$\pm$4.6} & 80.8$\pm$2.2 & 79.2$\pm$5.4 & 85.4$\pm$0.8 & \textbf{71.4$\pm$0.2} &0.781$\pm$0.082 & 1.528$\pm$0.180 & 0.559$\pm$0.018\\



\bottomrule
\end{tabular}
\end{sc}
\end{scriptsize}
\end{center}
\end{table*}

\begin{figure*}[ht]
    \centering
    \includegraphics[width=0.95\textwidth]{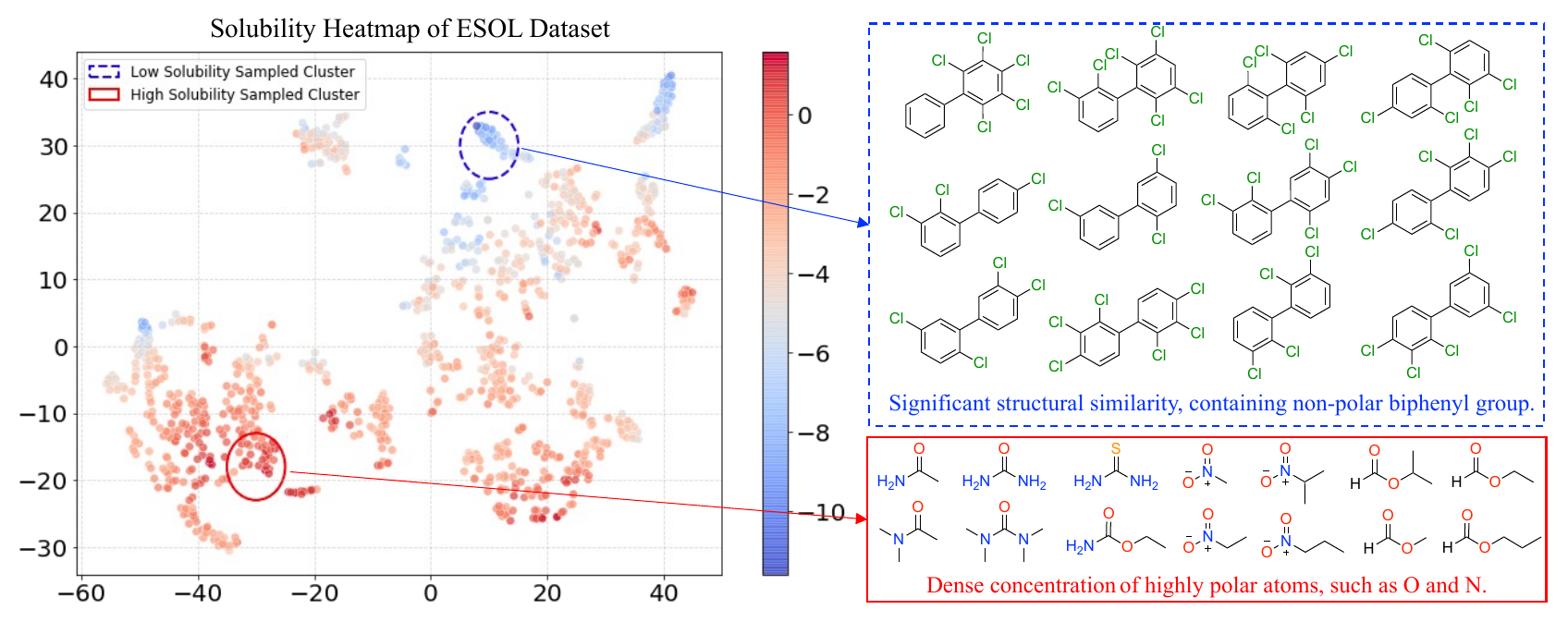}
    \caption{T-SNE visualization depicting the ESOL molecule embeddings alongside molecules within the highlighted region. Each point in the heatmap corresponds to the embeddings of respective molecules in ESOL, with color indicating solubility levels. Red denotes higher solubility, while blue indicates lower solubility.}
    \label{fig:ESOL-visualization}
\end{figure*}

\begin{figure*}[ht]
    \centering
    \includegraphics[width=0.95\textwidth]{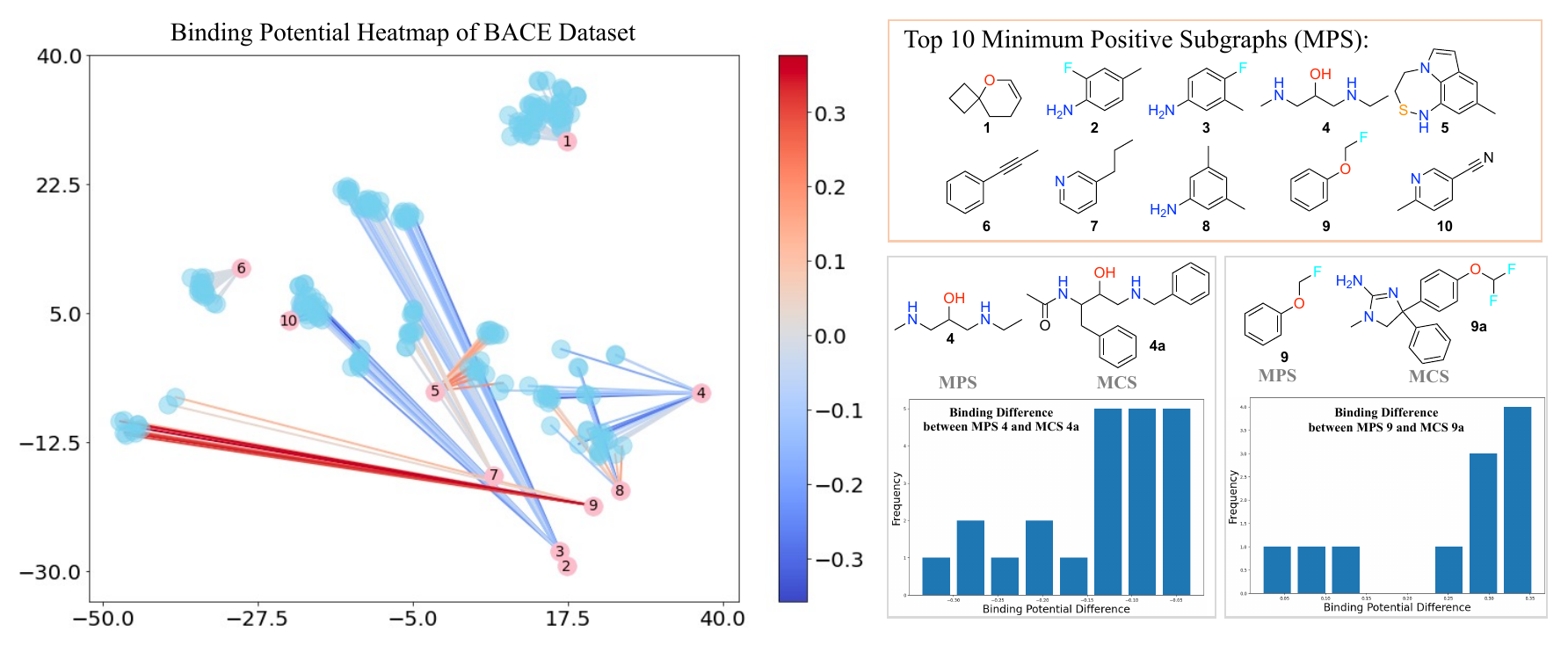}
    \caption{T-SNE Visualization of BACE embedding and clustering based on minimum positive subgraph (MPS). MPS represents minimum positive subgraph of a positive molecule; MCS represents maximum common subgraph of several positive molecules, sharing the same MPS. Pink nodes represent MPS, blue nodes depict molecules, and edge colors indicate binding potential differences. Red edges denote successful designs (original higher than MPS), while blue indicates less efficient designs (original lower than MPS).}
    \label{fig:bace-visualization}
\end{figure*}

\textbf{BACE}. We explore the binding potential of positive inhibitor molecules targeting BACE and their associated key functional substructures, referred to as minimum positive subgraphs (MPS). To identify MPS, we employ a Monte Carlo Tree Search (MCTS) approach integrated into our BACE classification model, as implemented in RationalRL \cite{jin2020multiobjective}. MCTS, being an iterative process, allows us to evaluate each candidate substructure for its binding potential with our model. Following the determination of MPSs, we categorize the original positive BACE molecules based on their respective MPSs and subsequently compute the maximum common subgraph (MCS) for each group. In Figure 3.b, we present the top 10 most frequently occurring MPSs, accompanied by their respective groups. As expected, molecular graph embeddings within the same group exhibit proximity after T-SNE reduction. This proximity is a result of shared identical motifs among graphs within each group.

In the development of new inhibitors, it is common to introduce additional functional groups or complex motifs to have more interactions with enzyme, thereby increasing the complexity of the molecules and making synthesis more challenging. However, such modifications do not guarantee a positive contribution or reward to inhibitor design. It is evident by the results of extending molecules from MPS to original molecules, which demonstrate both positive and negative contributions to binding potential. \textbf{Negative Contribution:} The complex molecules exhibits lower binding potentials than simple MPS \textbf{4}. This disparity can be illustrated by the distinction between MPS \textbf{4} and MCS \textbf{4a}: in MPS \textbf{4}, one of the NH groups, as a secondary amine, establishes a strong binding interaction with the enzyme, but this NH group transforms into an amide group in MCS \textbf{4a}, resulting in a significant decrease in binding capability. Therefore, these designs do not receive commensurate rewards considering the increased difficulty in synthesis. \textbf{Positive Contribution:} the complex molecules demonstrates higher binding potentials than MPS \textbf{9}, indicating successful design. This outcome can be attributed to MCS \textbf{9a}, which features more nitrogen and fluorine atoms capable of forming strong bindings with the enzyme. In summary, this experiment can serve as valuable guidelines for advancing inhibitor development.

\section{Related work}

\textbf{Contrastive Learning on Molecular Graphs.} The primary focus within the domain of contrastive learning applied to molecular graphs centers on 2D-2D graphs comparisons. Noteworthy representative examples: InfoGraph \cite{sun2019infograph} maximizes the mutual information between the representations of the graph and its substructures to guide the molecular representation learning;
GraphCL \cite{you2020graph}, MoCL \cite{sun2021mocl}, and MolCLR \cite{wang2022molecular} employs graph augmentation techniques to construct positive pairs; MoLR \cite{wang2022chemicalreactionaware}  establishes positive pairs with reactant-product relationships. In addition to 2D-2D graph contrastive learning, there are also noteworthy efforts exploring 2D-3D and 3D-3D contrastive learning in the field.
3DGCL \cite{moon20233d} is 3D-3D contrastive learning model, establishing positive pairs with conformers from the same molecules. GraphMVP \cite{liu2022pretraining}, GeomGCL \cite{li2022geomgcl}, and 3D Informax \cite{stark20223d} proposes 2D–3D view contrastive learning approaches. To conclude, 2D-2D and 3D-3D comparisons are intra-modality contratsive leraning, as only one graph encoder is employed in these studies. And these approaches often focus on  the motif and graph levels, leaving atom-level contrastive learning less explored.

\textbf{Multi-Similarity Learning.} Instance-wise discrimination, a crucial facet of similarity learning, involves evaluating the similarity between instances directly based on their latent representations or features \cite{wu2018unsupervised}. Naive instance-wise discrimination relies on self-similarity, leading to the development of contrastive loss \cite{hadsell2006dimensionality}. Although there are improved loss functions such as triplet loss \cite{hoffer2015deep}, quadruplet loss \cite{law2013quadruplet}, lifted structure loss \cite{oh2016deep}, N-pairs loss \cite{sohn2016improved}, and angular loss \cite{wang2017deep}, these methods still fall short in thoroughly capturing relative similarities \cite{wang2019multi}. To address this limitation, a joint multi-similarity loss has been proposed, incorporating pair weighting for each pair to enhance instance-wise discrimination \cite{wang2019multi, zhang2021jointly}. Notably, it is crucial to emphasize that employing these pair weightings requires the manual categorization of negative and positive pairs, as distinct weights are assigned to losses based on their categories.

\section{Discussion}

In summary, unlike other multi-similarity learning approaches that require explicit categorization of negative and positive pairs, our method enables a straightforward generalization of similarity measures, encapsulating both self-similarities and relative-similarities. Meanwhile, the generalized multi-similarity metrics satisfy the requirement of convergent similarity learning. Notably, our model adeptly integrates chemical semantics from diverse modalities, enhancing its performance across various downstream tasks. 
Additionally, our framework bridges machine learning and chemical domain knowledge through post-hoc experiment by identifying easily synthesizable and functional substructures, which can be refined into appropriate configurations by experts. Despite these accomplishments, further exploration is needed to achieve more effective integration of graph- and node-level similarities. Looking ahead, we are enthusiastic about the prospect of applying our model to additional fields, such as social science, thereby broadening its applicability and impact.

\section*{Accessibility}
The code and dataset will be made available upon the date of publication.



\bibliography{example_paper}

\begin{thebibliography}{64}
\providecommand{\natexlab}[1]{#1}
\providecommand{\url}[1]{\texttt{#1}}
\expandafter\ifx\csname urlstyle\endcsname\relax
  \providecommand{\doi}[1]{doi: #1}\else
  \providecommand{\doi}{doi: \begingroup \urlstyle{rm}\Url}\fi

\bibitem[Bajusz et~al.(2015)Bajusz, R{\'a}cz, and H{\'e}berger]{bajusz2015tanimoto}
Bajusz, D., R{\'a}cz, A., and H{\'e}berger, K.
\newblock Why is tanimoto index an appropriate choice for fingerprint-based similarity calculations?
\newblock \emph{Journal of cheminformatics}, 7\penalty0 (1):\penalty0 1--13, 2015.

\bibitem[Balcan \& Blum(2006)Balcan and Blum]{balcan2006theory}
Balcan, M.-F. and Blum, A.
\newblock On a theory of learning with similarity functions.
\newblock In \emph{Proceedings of the 23rd international conference on Machine learning}, pp.\  73--80, 2006.

\bibitem[Clevert et~al.(2021)Clevert, Le, Winter, and Montanari]{clevert2021img2mol}
Clevert, D.-A., Le, T., Winter, R., and Montanari, F.
\newblock Img2mol--accurate smiles recognition from molecular graphical depictions.
\newblock \emph{Chemical science}, 12\penalty0 (42):\penalty0 14174--14181, 2021.

\bibitem[Costanti et~al.(2023)Costanti, Kola, Scarselli, Valensin, and Bianchini]{costanti2023deep}
Costanti, F., Kola, A., Scarselli, F., Valensin, D., and Bianchini, M.
\newblock A deep learning approach to analyze nmr spectra of sh-sy5y cells for alzheimer’s disease diagnosis.
\newblock \emph{Mathematics}, 11\penalty0 (12):\penalty0 2664, 2023.

\bibitem[Fang et~al.(2022)Fang, Liu, Lei, He, Zhang, Zhou, Wang, Wu, and Wang]{fang2022geometry}
Fang, X., Liu, L., Lei, J., He, D., Zhang, S., Zhou, J., Wang, F., Wu, H., and Wang, H.
\newblock Geometry-enhanced molecular representation learning for property prediction.
\newblock \emph{Nature Machine Intelligence}, 4\penalty0 (2):\penalty0 127--134, 2022.

\bibitem[Fang et~al.(2023)Fang, Zhang, Zhang, Chen, Zhuang, Shao, Fan, and Chen]{fang2023knowledge}
Fang, Y., Zhang, Q., Zhang, N., Chen, Z., Zhuang, X., Shao, X., Fan, X., and Chen, H.
\newblock Knowledge graph-enhanced molecular contrastive learning with functional prompt.
\newblock \emph{Nature Machine Intelligence}, pp.\  1--12, 2023.

\bibitem[Gerothanassis et~al.(2002)Gerothanassis, Troganis, Exarchou, and Barbarossou]{gerothanassis2002nuclear}
Gerothanassis, I.~P., Troganis, A., Exarchou, V., and Barbarossou, K.
\newblock Nuclear magnetic resonance (nmr) spectroscopy: basic principles and phenomena, and their applications to chemistry, biology and medicine.
\newblock \emph{Chemistry Education Research and Practice}, 3\penalty0 (2):\penalty0 229--252, 2002.

\bibitem[Gilmer et~al.(2017)Gilmer, Schoenholz, Riley, Vinyals, and Dahl]{gilmer2017neural}
Gilmer, J., Schoenholz, S.~S., Riley, P.~F., Vinyals, O., and Dahl, G.~E.
\newblock Neural message passing for quantum chemistry.
\newblock In \emph{International conference on machine learning}, pp.\  1263--1272. PMLR, 2017.

\bibitem[Hadsell et~al.(2006)Hadsell, Chopra, and LeCun]{hadsell2006dimensionality}
Hadsell, R., Chopra, S., and LeCun, Y.
\newblock Dimensionality reduction by learning an invariant mapping.
\newblock In \emph{2006 IEEE computer society conference on computer vision and pattern recognition (CVPR'06)}, volume~2, pp.\  1735--1742. IEEE, 2006.

\bibitem[Halgren(1996)]{halgren1996merck}
Halgren, T.~A.
\newblock Merck molecular force field. i. basis, form, scope, parameterization, and performance of mmff94.
\newblock \emph{Journal of computational chemistry}, 17\penalty0 (5-6):\penalty0 490--519, 1996.

\bibitem[Heid et~al.(2023)Heid, Greenman, Chung, Li, Graff, Vermeire, Wu, Green, and McGill]{heid2023chemprop}
Heid, E., Greenman, K.~P., Chung, Y., Li, S.-C., Graff, D.~E., Vermeire, F.~H., Wu, H., Green, W.~H., and McGill, C.~J.
\newblock Chemprop: A machine learning package for chemical property prediction.
\newblock \emph{Journal of Chemical Information and Modeling}, 2023.

\bibitem[Hoffer \& Ailon(2015)Hoffer and Ailon]{hoffer2015deep}
Hoffer, E. and Ailon, N.
\newblock Deep metric learning using triplet network.
\newblock In \emph{Similarity-Based Pattern Recognition: Third International Workshop, SIMBAD 2015, Copenhagen, Denmark, October 12-14, 2015. Proceedings 3}, pp.\  84--92. Springer, 2015.

\bibitem[Hu et~al.(2019)Hu, Liu, Gomes, Zitnik, Liang, Pande, and Leskovec]{hu2019strategies}
Hu, W., Liu, B., Gomes, J., Zitnik, M., Liang, P., Pande, V., and Leskovec, J.
\newblock Strategies for pre-training graph neural networks.
\newblock \emph{arXiv preprint arXiv:1905.12265}, 2019.

\bibitem[Hu et~al.(2020)Hu, Dong, Wang, Chang, and Sun]{hu2020gpt}
Hu, Z., Dong, Y., Wang, K., Chang, K.-W., and Sun, Y.
\newblock Gpt-gnn: Generative pre-training of graph neural networks.
\newblock In \emph{Proceedings of the 26th ACM SIGKDD International Conference on Knowledge Discovery \& Data Mining}, pp.\  1857--1867, 2020.

\bibitem[Jaiswal et~al.(2020)Jaiswal, Babu, Zadeh, Banerjee, and Makedon]{jaiswal2020survey}
Jaiswal, A., Babu, A.~R., Zadeh, M.~Z., Banerjee, D., and Makedon, F.
\newblock A survey on contrastive self-supervised learning.
\newblock \emph{Technologies}, 9\penalty0 (1):\penalty0 2, 2020.

\bibitem[Jin et~al.(2020)Jin, Barzilay, and Jaakkola]{jin2020multiobjective}
Jin, W., Barzilay, R., and Jaakkola, T.
\newblock Multi-objective molecule generation using interpretable substructures, 2020.

\bibitem[Lambert et~al.(2019)Lambert, Mazzola, and Ridge]{lambert2019nuclear}
Lambert, J.~B., Mazzola, E.~P., and Ridge, C.~D.
\newblock \emph{Nuclear magnetic resonance spectroscopy: an introduction to principles, applications, and experimental methods}.
\newblock John Wiley \& Sons, 2019.

\bibitem[Landrum(2006)]{landrum2006rdkit}
Landrum, G.
\newblock Rdkit: Open-source cheminformatics. 2006.
\newblock \emph{Google Scholar}, 2006.

\bibitem[Law et~al.(2013)Law, Thome, and Cord]{law2013quadruplet}
Law, M.~T., Thome, N., and Cord, M.
\newblock Quadruplet-wise image similarity learning.
\newblock In \emph{Proceedings of the IEEE international conference on computer vision}, pp.\  249--256, 2013.

\bibitem[Li et~al.(2022)Li, Zhou, Xu, Dou, and Xiong]{li2022geomgcl}
Li, S., Zhou, J., Xu, T., Dou, D., and Xiong, H.
\newblock Geomgcl: Geometric graph contrastive learning for molecular property prediction.
\newblock In \emph{Proceedings of the Thirty-Six AAAI Conference on Artificial Intelligence}, pp.\  4541--4549, 2022.

\bibitem[Liu et~al.(2022{\natexlab{a}})Liu, Huang, Liu, and Deng]{liu2022attention}
Liu, H., Huang, Y., Liu, X., and Deng, L.
\newblock Attention-wise masked graph contrastive learning for predicting molecular property.
\newblock \emph{Briefings in bioinformatics}, 23\penalty0 (5):\penalty0 bbac303, 2022{\natexlab{a}}.

\bibitem[Liu et~al.(2019)Liu, Demirel, and Liang]{liu2019n}
Liu, S., Demirel, M.~F., and Liang, Y.
\newblock N-gram graph: Simple unsupervised representation for graphs, with applications to molecules.
\newblock \emph{Advances in neural information processing systems}, 32, 2019.

\bibitem[Liu et~al.(2021{\natexlab{a}})Liu, Wang, Liu, Lasenby, Guo, and Tang]{liu2021pre}
Liu, S., Wang, H., Liu, W., Lasenby, J., Guo, H., and Tang, J.
\newblock Pre-training molecular graph representation with 3d geometry.
\newblock \emph{arXiv preprint arXiv:2110.07728}, 2021{\natexlab{a}}.

\bibitem[Liu et~al.(2022{\natexlab{b}})Liu, Wang, Liu, Lasenby, Guo, and Tang]{liu2022pretraining}
Liu, S., Wang, H., Liu, W., Lasenby, J., Guo, H., and Tang, J.
\newblock Pre-training molecular graph representation with 3d geometry.
\newblock In \emph{International Conference on Learning Representations}, 2022{\natexlab{b}}.
\newblock URL \url{https://openreview.net/forum?id=xQUe1pOKPam}.

\bibitem[Liu et~al.(2021{\natexlab{b}})Liu, Zhang, Hou, Mian, Wang, Zhang, and Tang]{liu2021self}
Liu, X., Zhang, F., Hou, Z., Mian, L., Wang, Z., Zhang, J., and Tang, J.
\newblock Self-supervised learning: Generative or contrastive.
\newblock \emph{IEEE transactions on knowledge and data engineering}, 35\penalty0 (1):\penalty0 857--876, 2021{\natexlab{b}}.

\bibitem[Liu et~al.(2022{\natexlab{c}})Liu, Jin, Pan, Zhou, Zheng, Xia, and Philip]{liu2022graph}
Liu, Y., Jin, M., Pan, S., Zhou, C., Zheng, Y., Xia, F., and Philip, S.~Y.
\newblock Graph self-supervised learning: A survey.
\newblock \emph{IEEE Transactions on Knowledge and Data Engineering}, 35\penalty0 (6):\penalty0 5879--5900, 2022{\natexlab{c}}.

\bibitem[Mah{\'e} et~al.(2004)Mah{\'e}, Ueda, Akutsu, Perret, and Vert]{mahe2004extensions}
Mah{\'e}, P., Ueda, N., Akutsu, T., Perret, J.-L., and Vert, J.-P.
\newblock Extensions of marginalized graph kernels.
\newblock In \emph{Proceedings of the twenty-first international conference on Machine learning}, pp.\ ~70, 2004.

\bibitem[Moon et~al.(2023)Moon, Im, and Kwon]{moon20233d}
Moon, K., Im, H.-J., and Kwon, S.
\newblock 3d graph contrastive learning for molecular property prediction.
\newblock \emph{Bioinformatics}, 39\penalty0 (6):\penalty0 btad371, 2023.

\bibitem[Moutafis et~al.(2016)Moutafis, Leng, and Kakadiaris]{moutafis2016overview}
Moutafis, P., Leng, M., and Kakadiaris, I.~A.
\newblock An overview and empirical comparison of distance metric learning methods.
\newblock \emph{IEEE transactions on cybernetics}, 47\penalty0 (3):\penalty0 612--625, 2016.

\bibitem[Mu et~al.(2023)Mu, Guttag, and Makar]{mu2023multi}
Mu, E., Guttag, J., and Makar, M.
\newblock Multi-similarity contrastive learning.
\newblock \emph{arXiv preprint arXiv:2307.02712}, 2023.

\bibitem[M{\"u}ller et~al.(2018)M{\"u}ller, Mika, Tsuda, and Sch{\"o}lkopf]{muller2018introduction}
M{\"u}ller, K.-R., Mika, S., Tsuda, K., and Sch{\"o}lkopf, K.
\newblock An introduction to kernel-based learning algorithms.
\newblock In \emph{Handbook of neural network signal processing}, pp.\  4--1. CRC Press, 2018.

\bibitem[Oh~Song et~al.(2016)Oh~Song, Xiang, Jegelka, and Savarese]{oh2016deep}
Oh~Song, H., Xiang, Y., Jegelka, S., and Savarese, S.
\newblock Deep metric learning via lifted structured feature embedding.
\newblock In \emph{Proceedings of the IEEE conference on computer vision and pattern recognition}, pp.\  4004--4012, 2016.

\bibitem[Peng et~al.(2020)Peng, Dong, Luo, Wu, and Zheng]{peng2020self}
Peng, Z., Dong, Y., Luo, M., Wu, X.-M., and Zheng, Q.
\newblock Self-supervised graph representation learning via global context prediction.
\newblock \emph{arXiv preprint arXiv:2003.01604}, 2020.

\bibitem[Rong et~al.(2020)Rong, Bian, Xu, Xie, Wei, Huang, and Huang]{rong2020self}
Rong, Y., Bian, Y., Xu, T., Xie, W., Wei, Y., Huang, W., and Huang, J.
\newblock Self-supervised graph transformer on large-scale molecular data.
\newblock \emph{Advances in Neural Information Processing Systems}, 33:\penalty0 12559--12571, 2020.

\bibitem[Schroff et~al.(2015)Schroff, Kalenichenko, and Philbin]{schroff2015facenet}
Schroff, F., Kalenichenko, D., and Philbin, J.
\newblock Facenet: A unified embedding for face recognition and clustering.
\newblock In \emph{Proceedings of the IEEE conference on computer vision and pattern recognition}, pp.\  815--823, 2015.

\bibitem[Sohn(2016)]{sohn2016improved}
Sohn, K.
\newblock Improved deep metric learning with multi-class n-pair loss objective.
\newblock \emph{Advances in neural information processing systems}, 29, 2016.

\bibitem[St{\"a}rk et~al.(2022)St{\"a}rk, Beaini, Corso, Tossou, Dallago, G{\"u}nnemann, and Li{\`o}]{stark20223d}
St{\"a}rk, H., Beaini, D., Corso, G., Tossou, P., Dallago, C., G{\"u}nnemann, S., and Li{\`o}, P.
\newblock 3d infomax improves gnns for molecular property prediction.
\newblock In \emph{International Conference on Machine Learning}, pp.\  20479--20502. PMLR, 2022.

\bibitem[Steinbeck et~al.(2003)Steinbeck, Krause, and Kuhn]{steinbeck2003nmrshiftdb}
Steinbeck, C., Krause, S., and Kuhn, S.
\newblock Nmrshiftdb constructing a free chemical information system with open-source components.
\newblock \emph{Journal of chemical information and computer sciences}, 43\penalty0 (6):\penalty0 1733--1739, 2003.

\bibitem[Su{\'a}rez-D{\'\i}az et~al.(2018)Su{\'a}rez-D{\'\i}az, Garc{\'\i}a, and Herrera]{suarez2018tutorial}
Su{\'a}rez-D{\'\i}az, J.~L., Garc{\'\i}a, S., and Herrera, F.
\newblock A tutorial on distance metric learning: Mathematical foundations, algorithms, experimental analysis, prospects and challenges (with appendices on mathematical background and detailed algorithms explanation).
\newblock \emph{arXiv preprint arXiv:1812.05944}, 2018.

\bibitem[Sun et~al.(2019)Sun, Hoffmann, Verma, and Tang]{sun2019infograph}
Sun, F.-Y., Hoffmann, J., Verma, V., and Tang, J.
\newblock Infograph: Unsupervised and semi-supervised graph-level representation learning via mutual information maximization.
\newblock \emph{arXiv preprint arXiv:1908.01000}, 2019.

\bibitem[Sun et~al.(2021)Sun, Xing, Wang, Chen, and Zhou]{sun2021mocl}
Sun, M., Xing, J., Wang, H., Chen, B., and Zhou, J.
\newblock Mocl: data-driven molecular fingerprint via knowledge-aware contrastive learning from molecular graph.
\newblock In \emph{Proceedings of the 27th ACM SIGKDD Conference on Knowledge Discovery \& Data Mining}, pp.\  3585--3594, 2021.

\bibitem[Wang et~al.(2022{\natexlab{a}})Wang, Li, Jin, Cho, Ji, Han, and Burke]{wang2022chemicalreactionaware}
Wang, H., Li, W., Jin, X., Cho, K., Ji, H., Han, J., and Burke, M.~D.
\newblock Chemical-reaction-aware molecule representation learning.
\newblock In \emph{International Conference on Learning Representations}, 2022{\natexlab{a}}.
\newblock URL \url{https://openreview.net/forum?id=6sh3pIzKS-}.

\bibitem[Wang et~al.(2017)Wang, Zhou, Wen, Liu, and Lin]{wang2017deep}
Wang, J., Zhou, F., Wen, S., Liu, X., and Lin, Y.
\newblock Deep metric learning with angular loss.
\newblock In \emph{Proceedings of the IEEE international conference on computer vision}, pp.\  2593--2601, 2017.

\bibitem[Wang et~al.(2019)Wang, Han, Huang, Dong, and Scott]{wang2019multi}
Wang, X., Han, X., Huang, W., Dong, D., and Scott, M.~R.
\newblock Multi-similarity loss with general pair weighting for deep metric learning.
\newblock In \emph{Proceedings of the IEEE/CVF conference on computer vision and pattern recognition}, pp.\  5022--5030, 2019.

\bibitem[Wang et~al.(2021)Wang, Min, Shao, and Wu]{wang2021molecular}
Wang, Y., Min, Y., Shao, E., and Wu, J.
\newblock Molecular graph contrastive learning with parameterized explainable augmentations.
\newblock In \emph{2021 IEEE International Conference on Bioinformatics and Biomedicine (BIBM)}, pp.\  1558--1563. IEEE, 2021.

\bibitem[Wang et~al.(2022{\natexlab{b}})Wang, Wang, Cao, and Barati~Farimani]{wang2022molecular}
Wang, Y., Wang, J., Cao, Z., and Barati~Farimani, A.
\newblock Molecular contrastive learning of representations via graph neural networks.
\newblock \emph{Nature Machine Intelligence}, 4\penalty0 (3):\penalty0 279--287, 2022{\natexlab{b}}.

\bibitem[Wang et~al.(2023)Wang, Chen, Chen, Shurberg, Liu, and Hong]{wang2023motif}
Wang, Y., Chen, S., Chen, G., Shurberg, E., Liu, H., and Hong, P.
\newblock Motif-based graph representation learning with application to chemical molecules.
\newblock In \emph{Informatics}, volume~10, pp.\ ~8. MDPI, 2023.

\bibitem[Wen et~al.(2023)Wen, Liu, Feng, Raj, Singh, Weller, Black, and Sch{\"o}lkopf]{wen2023pairwise}
Wen, Y., Liu, W., Feng, Y., Raj, B., Singh, R., Weller, A., Black, M.~J., and Sch{\"o}lkopf, B.
\newblock Pairwise similarity learning is simple.
\newblock In \emph{Proceedings of the IEEE/CVF International Conference on Computer Vision}, pp.\  5308--5318, 2023.

\bibitem[Wieder et~al.(2020)Wieder, Kohlbacher, Kuenemann, Garon, Ducrot, Seidel, and Langer]{wieder2020compact}
Wieder, O., Kohlbacher, S., Kuenemann, M., Garon, A., Ducrot, P., Seidel, T., and Langer, T.
\newblock A compact review of molecular property prediction with graph neural networks.
\newblock \emph{Drug Discovery Today: Technologies}, 37:\penalty0 1--12, 2020.

\bibitem[Wu et~al.(2018{\natexlab{a}})Wu, Ramsundar, Feinberg, Gomes, Geniesse, Pappu, Leswing, and Pande]{wu2018moleculenet}
Wu, Z., Ramsundar, B., Feinberg, E.~N., Gomes, J., Geniesse, C., Pappu, A.~S., Leswing, K., and Pande, V.
\newblock Moleculenet: a benchmark for molecular machine learning.
\newblock \emph{Chemical science}, 9\penalty0 (2):\penalty0 513--530, 2018{\natexlab{a}}.

\bibitem[Wu et~al.(2018{\natexlab{b}})Wu, Xiong, Yu, and Lin]{wu2018unsupervised}
Wu, Z., Xiong, Y., Yu, S.~X., and Lin, D.
\newblock Unsupervised feature learning via non-parametric instance discrimination.
\newblock In \emph{Proceedings of the IEEE conference on computer vision and pattern recognition}, pp.\  3733--3742, 2018{\natexlab{b}}.

\bibitem[Xia et~al.(2022)Xia, Zhao, Hu, Gao, Tan, Liu, Li, and Li]{xia2022mole}
Xia, J., Zhao, C., Hu, B., Gao, Z., Tan, C., Liu, Y., Li, S., and Li, S.~Z.
\newblock Mole-bert: Rethinking pre-training graph neural networks for molecules.
\newblock In \emph{The Eleventh International Conference on Learning Representations}, 2022.

\bibitem[Xiong et~al.(2019)Xiong, Wang, Liu, Zhong, Wan, Li, Li, Luo, Chen, Jiang, et~al.]{xiong2019pushing}
Xiong, Z., Wang, D., Liu, X., Zhong, F., Wan, X., Li, X., Li, Z., Luo, X., Chen, K., Jiang, H., et~al.
\newblock Pushing the boundaries of molecular representation for drug discovery with the graph attention mechanism.
\newblock \emph{Journal of medicinal chemistry}, 63\penalty0 (16):\penalty0 8749--8760, 2019.

\bibitem[Xu et~al.(2023{\natexlab{a}})Xu, Wang, Li, and Hong]{xu2023asymmetric}
Xu, H., Wang, Y., Li, Y., and Hong, P.
\newblock Asymmetric contrastive multimodal learning for advancing chemical understanding.
\newblock \emph{arXiv preprint arXiv:2311.06456}, 2023{\natexlab{a}}.

\bibitem[Xu et~al.(2023{\natexlab{b}})Xu, Zhou, and Hong]{xu2023molecular}
Xu, H., Zhou, Z., and Hong, P.
\newblock Molecular identification and peak assignment: Leveraging multi-level multimodal alignment on nmr.
\newblock \emph{arXiv preprint arXiv:2311.13817}, 2023{\natexlab{b}}.

\bibitem[Yang et~al.(2019)Yang, Swanson, Jin, Coley, Eiden, Gao, Guzman-Perez, Hopper, Kelley, Mathea, et~al.]{yang2019analyzing}
Yang, K., Swanson, K., Jin, W., Coley, C., Eiden, P., Gao, H., Guzman-Perez, A., Hopper, T., Kelley, B., Mathea, M., et~al.
\newblock Analyzing learned molecular representations for property prediction.
\newblock \emph{Journal of chemical information and modeling}, 59\penalty0 (8):\penalty0 3370--3388, 2019.

\bibitem[Yang \& Jin(2006)Yang and Jin]{yang2006distance}
Yang, L. and Jin, R.
\newblock Distance metric learning: A comprehensive survey.
\newblock \emph{Michigan State Universiy}, 2\penalty0 (2):\penalty0 4, 2006.

\bibitem[Yang et~al.(2021)Yang, Song, Yang, Yao, Zhang, Shi, Ji, Deng, and Wang]{yang2021cross}
Yang, Z., Song, J., Yang, M., Yao, L., Zhang, J., Shi, H., Ji, X., Deng, Y., and Wang, X.
\newblock Cross-modal retrieval between 13c nmr spectra and structures for compound identification using deep contrastive learning.
\newblock \emph{Analytical Chemistry}, 93\penalty0 (50):\penalty0 16947--16955, 2021.

\bibitem[You et~al.(2020{\natexlab{a}})You, Chen, Sui, Chen, Wang, and Shen]{you2020graph}
You, Y., Chen, T., Sui, Y., Chen, T., Wang, Z., and Shen, Y.
\newblock Graph contrastive learning with augmentations.
\newblock \emph{Advances in neural information processing systems}, 33:\penalty0 5812--5823, 2020{\natexlab{a}}.

\bibitem[You et~al.(2020{\natexlab{b}})You, Chen, Wang, and Shen]{you2020does}
You, Y., Chen, T., Wang, Z., and Shen, Y.
\newblock When does self-supervision help graph convolutional networks?
\newblock In \emph{international conference on machine learning}, pp.\  10871--10880. PMLR, 2020{\natexlab{b}}.

\bibitem[Zhang et~al.(2023)Zhang, Luo, and Gu]{zhang2023denoising}
Zhang, C., Luo, L., and Gu, B.
\newblock Denoising multi-similarity formulation: a self-paced curriculum-driven approach for robust metric learning.
\newblock In \emph{Proceedings of the AAAI Conference on Artificial Intelligence}, volume~37, pp.\  11183--11191, 2023.

\bibitem[Zhang et~al.(2021)Zhang, Shen, Li, Wang, Li, and Lang]{zhang2021jointly}
Zhang, L., Shen, S., Li, L., Wang, H., Li, X., and Lang, J.
\newblock Jointly multi-similarity loss for deep metric learning.
\newblock In \emph{2021 IEEE International Conference on Data Mining (ICDM)}, pp.\  1469--1474. IEEE, 2021.

\bibitem[Zhang et~al.(2022)Zhang, Chen, Zhong, Wang, Jiang, Zhang, Jiang, Zheng, and Li]{zhang2022graph}
Zhang, Z., Chen, L., Zhong, F., Wang, D., Jiang, J., Zhang, S., Jiang, H., Zheng, M., and Li, X.
\newblock Graph neural network approaches for drug-target interactions.
\newblock \emph{Current Opinion in Structural Biology}, 73:\penalty0 102327, 2022.

\bibitem[Zhou et~al.(2023)Zhou, Gao, Ding, Zheng, Xu, Wei, Zhang, and Ke]{Zhou2023UniMolAU}
Zhou, G., Gao, Z., Ding, Q., Zheng, H., Xu, H., Wei, Z., Zhang, L., and Ke, G.
\newblock Uni-mol: A universal 3d molecular representation learning framework.
\newblock In \emph{International Conference on Learning Representations}, 2023.
\newblock URL \url{https://api.semanticscholar.org/CorpusID:259298651}.

\end{thebibliography}
\bibliographystyle{icml2024}

\newpage
\appendix
\onecolumn
\begin{center}
   \Large{\textbf{Appendix}}
\end{center}
\counterwithin{figure}{section}
\counterwithin{equation}{section}
\counterwithin{table}{section}
\setcounter{figure}{0} 
\setcounter{equation}{0} 
\setcounter{table}{0} 
\section {Multi-Similarity \& Contrastive Learning}
\subsection{Multi-Similarities in Contrastive Learning}
Two distinct types of similarities, as illustrated in Appendix Figure \ref{fig:similarities}, can be identified: \textit{self-similarity} (the pairwise similarity between two objects, typically defined through cosine similarity) and \textit{relative similarity} (distinctions in self-similarity with other pairs) \cite{wang2019multi}.
\begin{figure}[H] 
    \centering
    \includegraphics[width=0.48\textwidth]{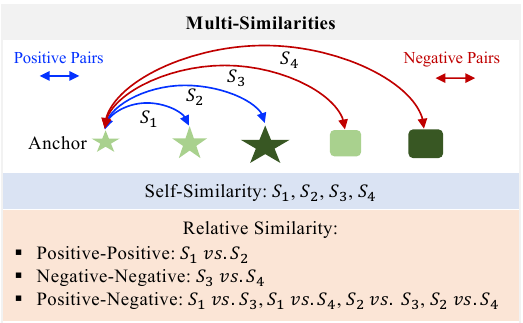}
    \caption{Illustration of Different Types of Similarities.}
    \label{fig:similarities}
\end{figure}

\subsection{Current Molecular Graph Contrastive Learning Approaches}
In current molecular graph contrastive learning approaches, positive pairs are commonly formed through either \textit{data augmentation} \cite{sun2021mocl, you2020graph}, employing techniques such as node deletion, edge perturbation, subgraph extraction, attribute masking, and subgraph substitution, or \textit{domain knowledge}, as demonstrated by reactant-product pairing \cite{wang2022chemicalreactionaware} or conformer grouping \cite{moon20233d}.

\begin{figure}[H] 
    \centering
    \includegraphics[width=0.9\textwidth]{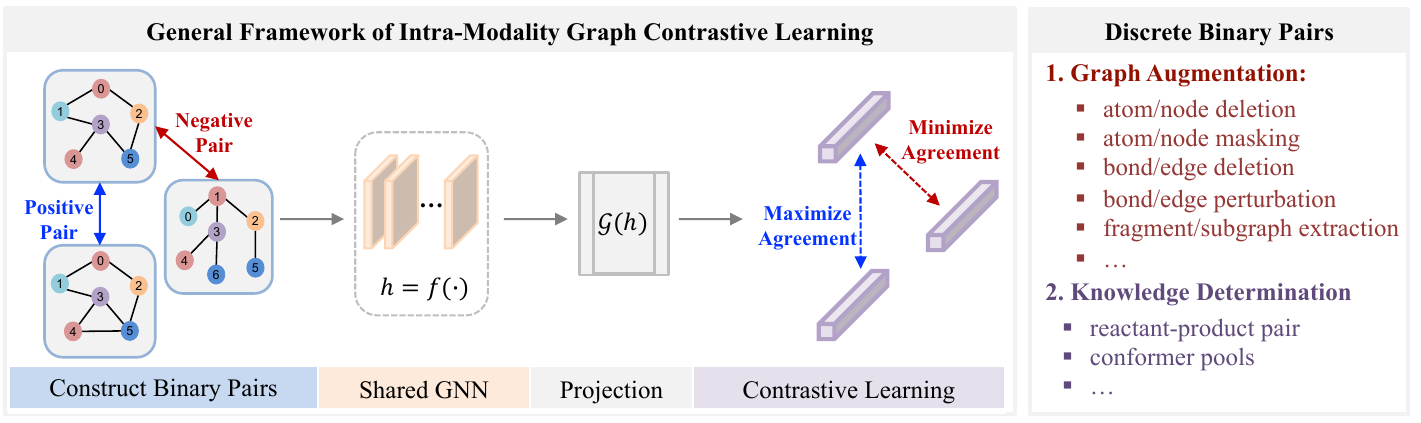}
    \caption{General framework of Intra-Modality Graph Contrastive Learning. It relies on definition of positive and negative pairs.}
    \label{fig:traditional-cl}
\end{figure}

\section{Supplementary Proof}
\subsection{Revisiting Theorem of Convergent Similarity Learning}
\label{appendix:gml-guide-proof}
Let $\mathcal{S}$ be a set of instances with size $|\mathcal{S}|$, and let $\mathcal{P}$ represent the tunable latent representations of instances in $\mathcal{S}$ such that $|\mathcal{P}| = |\mathcal{S}|$. For any two instances $i, j \in \mathcal{S}$, their latent representations are denoted by $\mathcal{P}_{i}$ and $\mathcal{P}_{j}$, respectively. Let $t_{i,j}$ represent the target similarity between instances $i$ and $j$ in a given domain, and $d_{i,j}$ be the similarity between $\mathcal{P}_i$ and $\mathcal{P}_j$ in the latent space.
\begin{theorem}[Theorem of Convergent Similarity learning]
Given $t_{i,j}$ is non-negative and $\{t_{i,j}\}$ satisfies the constraint $\sum_{j=1}^{|\mathcal{S}|}t_{i,j} = 1$, consider the loss function for an instance $i$ defined as follows:
\begin{equation}
    L(i) = -\sum_{j=1}^{|\mathcal{S}|} t_{i,j} \log \left( \frac{e^{d_{i,j}}}{\sum_{k=1}^{|\mathcal{S}|} e^{d_{i,k}}} \right)
\end{equation}
then when it reaches ideal optimum, the relationship between $t_{i,j}$ and $d_{i,j}$ satisfies:
\begin{equation}
    \text{softmax}(d_{i,j}) = t_{i,j}
\end{equation}
\end{theorem}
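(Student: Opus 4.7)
The plan is to recognize $L(i)$ as a cross-entropy between the target distribution $\{t_{i,j}\}_j$ and the softmax-induced distribution over $\{d_{i,j}\}_j$, and then apply the standard non-negativity of the Kullback-Leibler divergence (Gibbs' inequality).

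First I would set $q_{i,j} := e^{d_{i,j}}/\sum_{k=1}^{|\mathcal{S}|} e^{d_{i,k}}$, so that by construction $\{q_{i,j}\}_{j=1}^{|\mathcal{S}|}$ is a probability distribution on $\mathcal{S}$. The hypotheses $t_{i,j}\ge 0$ and $\sum_j t_{i,j}=1$ make $t_i$ a probability distribution as well, and the loss rewrites as the cross-entropy $L(i) = -\sum_j t_{i,j}\log q_{i,j}$.

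Next I would apply the standard decomposition
\[
L(i) \;=\; H(t_i) \;+\; D_{\mathrm{KL}}(t_i \,\|\, q_i),
\]
where $H(t_i) = -\sum_j t_{i,j}\log t_{i,j}$ is the Shannon entropy of the targets (with the convention $0\log 0 = 0$). Because $H(t_i)$ depends only on the targets and not on the tunable $d_{i,j}$, minimizing $L(i)$ over the latent representations is equivalent to minimizing $D_{\mathrm{KL}}(t_i \,\|\, q_i)$. Gibbs' inequality then gives $D_{\mathrm{KL}}(t_i\,\|\,q_i)\ge 0$ with equality if and only if $q_i = t_i$, yielding $\mathrm{softmax}(d_{i,j}) = q_{i,j} = t_{i,j}$ at the ideal optimum.

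The main subtlety (rather than a genuine obstacle) is that the image of $d\mapsto\mathrm{softmax}(d)$ is the \emph{open} probability simplex: if some $t_{i,j}=0$, the infimum is only approached in a limit (e.g.\ $d_{i,j}\to-\infty$) rather than attained at finite $d$. I would handle this by stating the conclusion as the unique minimizer whenever $t_i$ lies in the interior of the simplex, and as a limiting identity on the boundary. As a sanity check and independent derivation, I would also present a Lagrange-multiplier calculation: minimizing $-\sum_j t_{i,j}\log q_{i,j}$ over $q_i$ subject to $\sum_j q_{i,j}=1$ gives the first-order condition $-t_{i,j}/q_{i,j} + \lambda = 0$, hence $q_{i,j}\propto t_{i,j}$, and the constraint forces equality $q_{i,j}=t_{i,j}$, matching the KL-divergence argument.
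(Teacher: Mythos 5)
Your proposal is correct, but it takes a genuinely different route from the paper. The paper argues by direct calculus in the variables $d_{i,j}$: it computes $\partial L(i)/\partial d_{i,j} = -\bigl(t_{i,j} - \mathrm{softmax}(d_{i,j})\bigr)$, sets this to zero, and then checks that the diagonal second derivative $\mathrm{softmax}(d_{i,j})\bigl(1-\mathrm{softmax}(d_{i,j})\bigr)$ is positive to argue the critical point is a global minimum; it closes by noting that at the optimum the ordering of $d_{i,j}$ matches that of $t_{i,j}$. You instead rewrite $L(i)$ as the cross-entropy $H(t_i) + D_{\mathrm{KL}}(t_i\,\|\,q_i)$ and invoke Gibbs' inequality. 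Your decomposition buys a cleaner global-optimality argument --- the paper's check of only the diagonal of the Hessian does not by itself rule out saddle behavior (one really needs convexity of the log-sum-exp composite, which your KL argument sidesteps entirely) --- together with uniqueness of the optimal distribution $q_i$ and an explicit treatment of the boundary case $t_{i,j}=0$, where the optimum is only attained in the limit $d_{i,j}\to-\infty$; the paper is silent on both points. What the paper's computation buys in exchange is the explicit gradient formula $\mathrm{softmax}(d_{i,j}) - t_{i,j}$, which is informative about training dynamics, and the corollary that $t_{i,j} > t_{i,j'}$ implies $d_{i,j} > d_{i,j'}$ at the optimum, which you could also extract from $q_{i,j}=t_{i,j}$ but do not state. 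Your Lagrange-multiplier sanity check is consistent with both arguments.
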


\begin{proof}
In order to optimize the loss $L(i)$, we need to set the following partial derivative to be 0 for each $d_{i,j}$ with $1\leq j \leq |\mathcal{M}|$. Here are the detailed steps:
\begin{align*}
\frac{\partial L(i)}{\partial {d_{i,j}}} 
&= \frac{\partial}{\partial d_{i,j}}\underbrace{\left( - t_{i,j} \log \frac{e^{d_{i,j}}}{e^{d_{i,j}} + \sum_{\substack{k \neq j}} e^{d_{i,k}}} \right)}_{\text{When the numerator includes } e^{d_{i,j}}} + \frac{\partial}{\partial d_{i,j}}\underbrace{\left( \sum_{\substack{k \neq j}} - t_{i,k} \log \frac{e^{d_{i,k}}}{e^{d_{i,j}} + \sum_{\substack{k \neq j}} e^{d_{i,k}}} \right)}_{\text{When the numerator does not include }e^{d_{i,j}}} \\
&= -(t_{i,j} - t_{i,j} \cdot \text{softmax}(d_{i,j})) - \sum_{\substack{k \neq j}} t_{i,k} \cdot \text{softmax}(d_{i,j}) \\
&= - \left( t_{i,j} - \left(t_{i,j} + \sum_{\substack{k \neq j}} t_{i,k} \right) \cdot \text{softmax}(d_{i,j})\right)
\end{align*}
Since $\sum_{l=1}^{|\mathcal{M}|}t_{i,l} = 1$, we can further simplify it as 
\begin{align*}
\frac{\partial L(i)}{\partial {d_{i,j}}} = - (t_{i,j} -  \text{softmax}(d_{i,j}))
\end{align*}
In order to optimize, we need to see the above partial derivative to be 0:
\begin{align*}
\frac{\partial L(i)}{\partial {d_{i,j}}} = - (t_{i,j} -  \text{softmax}(d_{i,j})) = 0
\end{align*}
In addition, the corresponding second partial derivative denoted as $\frac{\partial L(i)}{\partial {d^2_{i,j}}}$ manifests as follows:
\begin{align*}
\frac{\partial L(i)}{\partial {d^2_{i,j}}} = \text{softmax}(d_{i,j})(1- \text{softmax}(d_{i,j}))
\end{align*}
As $\text{softmax}(d_{i,j})$ takes values within the open interval (0,1), it follows that $\frac{\partial L(i)}{\partial {d^2_{i,j}}}$ is always positive. Consequently, the global optimum is global minimum.\\
Furthermore, when it comes to optimum:
\begin{align*}
t_{i,j} &= \text{softmax}(d_{i,j}) \\
d_{i,j} &= \log(t_{i,j}) + \log \left( \sum_{\substack{1\leq l \leq |\mathcal{M}|}} e^{d_{i,j}} \right)
\end{align*}
It is easy to show that when it reaches optimum, $d_{i,j}$ is consistent with target similarity metric $t_{i,j}$. Without loss of generosity, suppose $t_{i,j} > t_{i,j'}$ :
\begin{align*}
d_{i,j} - d_{i,j'} &= \log(t_{i,j}) + \log \left( \sum_{\substack{1\leq l \leq |\mathcal{M}|}} e^{d_{il}} \right) - \left( \log(t_{i,j'}) + \log \left( \sum_{\substack{1\leq l \leq |\mathcal{M}|}} e^{d_{il}} \right) \right) \\
&= \log(t_{i,j}) - \log(t_{i,j'}) \\
&= \log\left( \frac{t_{i,j}}{t_{i,j'}} \right) > 0
\end{align*}
\end{proof}

\subsection{Guarantee of Sum of Fused Multimodal Similarity}
\label{appendix:guarantee-fusion-sum}
Given sets of uni-modal generalized similarity $\{t^{R}\}$ and $\sum w_{t^{R}} = 1$, the sum of fused multimodal similarity also equals 1, as demonstrated below:
\begin{align*}
\sum (t_{i,j}^{R})
& = \sum \sum(w_{R} \cdot t^{R}_{i,j})\\
& = \sum (w_{R}  \sum t^{R}_{i,j})\\
& = \sum w_{R} \cdot 1 = 1
\label{equ:graph-guide-zhou-sum}
\end{align*}

\section{Revisiting Multi-Similarity Settings}
\subsection{Encoders \& Packages}
To derive the self-similarities, we need to reply on pre-trained encoders or well-defined packages as follows:

\begin{table}[h]
\begin{center}
\begin{small}
\caption{Encoders and packages used to produce self-similarities}
\begin{tabular}{l|c|c|c}
\hline
Unimodal & Representation & Encoder/Package & Pre-trained Source \\ 
\hline
Image & 2D image & CNN & Img2mol \cite{clevert2021img2mol} \\
SMILES & Sequence &Transformer  & CReSS \cite{yang2021cross}  \\
\textsuperscript{13}CNMR Spectrum& Sequence & 1D CNN & AutoEncoder \cite{costanti2023deep}  \\
\textsuperscript{13}CNMR peak & Scalar & NMRShiftDB2 \cite{steinbeck2003nmrshiftdb} & N/A  \\
Fingerprint & Sequence & RDKit \cite{landrum2006rdkit}  & N/A  \\

\hline

\end{tabular}
\end{small}
\end{center}
\end{table}

\subsection{Self-Similarity at Graph Level}
\label{appendix:uni-modal-self}
\textbf{Fingerprint.} The mathematical formula of fingerprint similarity, denoted as $S_{i,j}^{F}$, can be viewed as follows:
\begin{align}
   S_{i,j}^{F} & = Tanimoto(A, B) = \frac{|A \cap B|}{|A \cup B|}  
\end{align}
where \( A \) and \( B \) are sets of molecular fragments for molecule $i$ and $j$, and \( |A \cap B| \) and \( |A \cup B| \) denote the size of their intersection and union, respectively.

\textbf{Image.} The self-similarity for Image, denoted as $S_{i,j}^{I}$, can be defined as follows:
\begin{equation}
    S_{i,j}^{I} = Cos( \mathcal V_{i}, \mathcal V_{j}) = \frac{\mathcal{V}_i \cdot \mathcal{V}_j^T}{\|\mathcal{V}_i\| \cdot \|\mathcal{V}_j\|}
\end{equation}
where $\mathcal V_{i},  \mathcal V_{j} $ represents the embedding of Image for two given molecules.

\textbf{NMR Spectrum.} The self-similarity for NMR spectrum, denoted as $S_{i,j}^{C}$, can be defined as follows:
\begin{equation}
    S_{i,j}^{C} = Cos( \mathcal V_{i}, \mathcal V_{j}) = \frac{\mathcal{V}_i \cdot \mathcal{V}_j^T}{\|\mathcal{V}_i\| \cdot \|\mathcal{V}_j\|}
\end{equation}
where $\mathcal V_{i},  \mathcal V_{j} $ represents the embedding of NMR spectra for two given molecules.

\textbf{Smiles.} The self-similarity for Smiles, denoted as $S_{i,j}^{S}$, can be defined as follows:
\begin{equation}
    S_{i,j}^{S} = Cos( \mathcal V_{i}, \mathcal V_{j}) = \frac{\mathcal{V}_i \cdot \mathcal{V}_j^T}{\|\mathcal{V}_i\| \cdot \|\mathcal{V}_j\|}
\end{equation}
where $\mathcal V_{i},  \mathcal V_{j} $ represents the embedding of Smiles for two given molecules.

\subsection{A Brief Introduction to PPM}
\label{appendix:ppm}
In chemistry, $^{13}$C NMR stands out as a common technique for structural analysis by revealing molecular structures by elucidating the chemical environments of carbon atoms and their magnetic responses to external fields \cite{gerothanassis2002nuclear,lambert2019nuclear}. It quantifies these features in parts per million (ppm) relative to a reference compound, such as tetramethylsilane (TMS), thereby simplifying comparisons across experiments. As a result, the continuous peak positions, measured in parts per million (ppm), offer a robust knowledge span—a natural ordering metric that can be employed to derive measures of similarity \cite{xu2023molecular}.

\subsection{Configuration of Fused Multimodal Generalized Similarity Metric}
\label{appendix:config-fusion}
A simple linear combination is used to formulate the multimodal multi-similarity $t_{i,j}^{M}$ between the $i^{th}$ and $j^{th}$ molecules, represented as a graph-level similarity $t_{i,j}^{g}$, as follows:
\begin{align}
t_{i,j}^{g} & = t_{i,j}^{M} = w_{SM} \cdot t^{SM}_{i,j} + w_{{C}} \cdot t^{C}_{i,j} + w_{{I}} \cdot t^{I}_{i,j} + w_{{F}} \cdot t^{F}_{i,j}
\label{equ:graph-guide-zhou}
\end{align}
where $t^{SM}_{i,j}$ denotes the similarity based on SMILES, $t^{C}_{i,j}$ denotes the similarity with respect to $^{13}$C NMR spectrum, $t^{I}_{i,j}$ denotes the similarity regarding images, and $f$ denotes the similarity based on fingerprints, $w_{SM}$, $w_{C}$, $w_{I}$, and $w_{F}$ are the pre-defined weights for their respective similarity, and $w_{SM} + w_{{C}} + w_{{I}} + w_{{F}} = 1$.

For pre-defined weights, denoted as $w_{SM}$, $w_{C}$, $w_{I}$, and $w_{F}$, we configure them with various settings for ablation studies, as shown in the following table:
\begin{table}[ht]
    \begin{center}
    \begin{small}
    \caption{The configuration of weight for each unimodality similarity in different Fusion. In particular $w_{S}$, $w_{N}$, $w_{M}$ and $w_{F}$ represents the weight of Smiles, NMR, Image and Fingerprint, respectively.}
    \label{tab:appendix-config_fusion}
    \vspace{8pt}
    \begin{sc}
        \begin{tabular}{lcccc}
        \toprule
        Fused multimodal& $w_{SM}$  & $w_{N}$ & $w_{M}$ & $w_{F}$  \\
        \hline
        Smiles & 1.00 & 0.00 & 0.00 & 0.00\\

        NMR & 0.00 & 1.00 & 0.00 & 0.00\\

        Image & 0.00 & 0.00 & 1.00 & 0.00\\ 

        Fingperprint & 0.00 & 0.00 & 0.00 & 1.00\\
        
        $\text{Fusion}_{\text{Smiles}}$ & 0.70 & 0.10 & 0.10 & 0.10 \\

        $\text{Fusion}_{\text{NMR}}$ & 0.10 & 0.70 & 0.10 & 0.10\\
        
        $\text{Fusion}_{\text{Image}}$ &  0.10 & 0.10 & 0.70 & 0.10 \\
        
        $\text{Fusion}_{\text{Fingerprint}}$&  0.10 & 0.10 & 0.10 & 0.70 \\
        
        $\text{Fusion}_{\text{Average}}$ &  0.25 & 0.25 & 0.25 & 0.25\\
        \bottomrule
        \end{tabular}
    \end{sc}
    \end{small}
    \end{center}

\vskip -0.1in
\end{table}

\section{Experimental Settings}
\label{appendix:exp-setting}
\subsection{Pre-Training Setting}
During pretraining, we utilized an Adam optimizer with a learning rate set to 0.001, spanning 200 epochs and employing a batch size of 256. The model was trained on 30,000 data points. The NMR data were experimental data, extracted from NMRShiftDB2 \cite{steinbeck2003nmrshiftdb}. Other chemical modalities, such as images, fingerprints and graphs, were produced from SMILES by RDKit \cite{landrum2006rdkit}.


\subsection{Fine-Tuning Setting}

\subsubsection{Datasets} 

For fine-tuning, our model was trained on 11 drug discovery-related benchmarks sourced from MoleculeNet \cite{wu2018moleculenet}. Eight of these benchmarks were designated for classification downstream tasks, including BBBP, BACE, SIDER, CLINTOX, HIV, MUV, TOX21, and ToxCast, while three were allocated for regression tasks, namely ESOL, Freesolv, and Lipo. The datasets were divided into train/validation/test sets using a ratio of 80\%:10\%:10\%, accomplished through the scaffold splitter \cite{halgren1996merck, landrum2006rdkit} from Chemprop \cite{yang2019analyzing, heid2023chemprop}, like previous works. The scaffold splitter categorizes molecular data based on substructures, ensuring diverse structures in each set. Molecules are partitioned into bins, with those exceeding half of the test set size assigned to training, promoting scaffold diversity in validation and test sets. Remaining bins are randomly allocated until reaching the desired set sizes, creating multiple scaffold splits for comprehensive evaluation.


\subsubsection{Baselines}
\label{appendix:baselines}
We systematically compared GraphMSL's performance with various state-of-the-art baseline models across different categories. In the realm of supervised models, AttentiveFP \cite{xiong2019pushing} and DMPNN \cite{yang2019analyzing} stand out by leveraging graph attention networks and node-edge interactive message passing, respectively. The unsupervised learning method N-Gram \cite{liu2019n} employs graph embeddings and short walks for graph representation. Predictive self-supervised learning methods, such as GEM \cite{fang2022geometry} and Uni-Mol \cite{Zhou2023UniMolAU}, are specifically designed for predicting molecular geometric information. GROVER \cite{rong2020self} integrates Message Passing Networks into a Transformer-style architecture, creating a class of more expressive encoders for molecules. Moreover, our evaluation encompasses a range of contrastive learning methods, namely InfoGraph \cite{sun2019infograph}, GraphCL \cite{you2020graph}, MolCLR \cite{wang2022molecular}, and GraphMVP \cite{liu2022pretraining}, all serving as essential baselines. The baseline results are collected from recent works \cite{fang2022geometry, Zhou2023UniMolAU, moon20233d, fang2023knowledge}.

\subsubsection{Evaluation} 
To assess the effectiveness of our fine-tuned model, we measure the ROC-AUC for classification downstream tasks, and the root mean squared error (RMSE) metric for regression tasks. In order to ensure a fair and robust comparisons, we conduct three independent runs using three different random seeds for scaffold splitting across all datasets. The reported performance metrics are then averaged across these runs, and the standard deviation is computed as prior works.

\section{More Ablation Study}
$\text{GraphMSL}_{\text{bi-level}}$ demonstrates top performance in BBBP, BACE, CLINTOX, HIV, MUV, and Tox21. Additionally, within the $\text{GraphMSL}{\text{bi-level}}$ framework, $\text{Fusion}{\text{Smiles}}$ enhances performance in Sider, CLINTOX, and MUV. $\text{Fusion}{\text{NMR}}$ improves results in BBBP and HIV, while $\text{Fusion}_{\text{Image}}$ contributes to better outcomes in the Tox21 and ToxCast tasks.

\begin{table*}[h]
\caption{Ablation study on the performances of GraphMSL with bi-level (Graph + Node) and solo-node-level. The best results are denoted in boldf, and the second-best are indicated with underlining.}
\label{tab:multimodal-metrics-bi-level}
\begin{center}
\begin{tiny}
\begin{sc}
\begin{tabular}{lcccccccc|ccc}
\toprule
Data Set & BBBP & bace & Sider & Clintox & HIV & MUV & Tox21 & ToxCast & ESOL & FreeSolv & Lipo \\
\midrule
SMILES + Node & 93.7$\pm$1.3 & 93.2$\pm$3.7 & 65.9$\pm$1.7 & 87.7$\pm$7.8 & 82.3$\pm$1.8 & 80.9$\pm$5.5 & 84.7$\pm$1.7 & 70.4$\pm$1.5 & 0.873$\pm$ 0.085 & 1.658$\pm$ 0.243 & 0.594$\pm$ 0.031\\
NMR + Node & 91.9$\pm$3.1 & 92.8$\pm$1.6 & 65.7$\pm$1.6 & 89.5$\pm$3.4 & 81.2$\pm$1.2 & 80.6$\pm$5.5 & 85.1$\pm$0.2 & 69.3$\pm$0.1 & 1.052$\pm$ 0.105 & 2.391$\pm$ 0.175 & 0.654$\pm$ 0.025 \\
Image + Node & 94.1$\pm$1.7 & 90.8$\pm$0.7 & 63.8$\pm$2.8 & 86.5$\pm$8.0 & 80.3$\pm$1.5 & 76.7$\pm$2.7 & 85.3$\pm$1.1 & 70.8$\pm$1.5 &\textbf{0.843$\pm$ 0.094} & \textbf{1.601$\pm$ 0.057} & \textbf{0.562$\pm$ 0.005} \\
Fingerprint + Node & 90.2$\pm$8.4 & \textbf{94.5$\pm$0.7} & 64.3$\pm$2.9 & 91.0$\pm$1.5 & 82.0$\pm$2.4 & 79.2$\pm$5.9 & 85.7$\pm$0.5 & 69.7$\pm$1.3 &1.170$\pm$ 0.174 & 2.801$\pm$ 0.276 & 0.607$\pm$ 0.034\\
\hline


$\text{Fusion}_{\text{Smiles}}$ + Node & 91.7$\pm$5.3 & 91.5$\pm$1.7 & \textbf{67.3$\pm$0.6} & \textbf{93.8$\pm$0.8} & 82.1$\pm$1.7 & \textbf{81.5$\pm$3.7} & 85.1$\pm$0.1 & 70.4$\pm$1.3 & 0.965$\pm$0.085 & 2.859$\pm$0.281 & 0.647$\pm$0.029 \\

$\text{Fusion}_{\text{NMR}}$ + Node & \textbf{94.3$\pm$0.8} & 93.6$\pm$1.9 & 66.8$\pm$1.4 & 90.4$\pm$3.1 & \textbf{83.0$\pm$0.7} & 80.1$\pm$3.5 & 85.5$\pm$0.6 & 70.6$\pm$1.8 & 1.009$\pm$0.160 & 2.224$\pm$0.368 & 0.589$\pm$0.038\\

$\text{Fusion}_{\text{Image}}$ + Node & 94.2$\pm$1.2 & 93.1$\pm$2.5 & 66.4$\pm$1.6 & 90.7$\pm$3.5 & 82.0$\pm$2.4 & 80.8$\pm$3.8 & \textbf{86.1$\pm$0.8} & \textbf{71.2$\pm$1.1} & 0.898$\pm$0.098 & 1.691$\pm$0.386 & 0.579$\pm$0.018 \\

$\text{Fusion}_{\text{Fingerprint}}$ + Node& 91.6$\pm$5.0 & 94.3$\pm$2.4 & 66.4$\pm$1.9 & 85.3$\pm$6.8 & 82.0$\pm$2.4 & 80.6$\pm$3.2 & 85.2$\pm$0.2 & 69.8$\pm$1.1 & 1.037$\pm$0.170 & 2.093$\pm$0.090 & 0.607$\pm$0.034 \\

$\text{Fusion}_{\text{Average}}$ + Node & 92.7$\pm$1.5 & 92.6$\pm$2.1 & 65.6$\pm$0.7 & 89.3$\pm$4.0 & 81.8$\pm$1.7 & 81.0$\pm$5.0 & 85.4$\pm$1.3 & \underline{71.2$\pm$1.9} & 1.019$\pm$0.118 & 1.733$\pm$0.267 & 0.593$\pm$0.004\\
\hline
Node & 93.4$\pm$2.7 & 89.3$\pm$1.7 & 62.8$\pm$2.1 & 86.1$\pm$5.4& 82.1$\pm$0.4 & 75.4$\pm$5.2 & 84.9$\pm$1.0 & 
70.6$\pm$0.8 & 0.924$\pm$0.083 &1.707$\pm$0.126 & 0.587$\pm$0.021  \\
\bottomrule
\end{tabular}
\end{sc}
\end{tiny}
\end{center}
\end{table*}



\end{document}